\newcommand{\removelatexerror}{\let\@latex@error\@gobble}
\newcounter{MYtempeqncnt}
\def\@endtheorem{\endtrivlist}
\newtheorem{theorem}{Theorem}
\newtheorem{definition}{Definition}
\newtheorem{proposition}{Proposition}
\newtheorem{remark}{Remark}
\newtheorem{corollary}{Corollary}
\newtheorem{lemma}{Lemma}
\newtheorem{problem}{Problem}
\date{\today}
\newcommand{\nn}{{\mathscr{N}\negthickspace\negthickspace\negthinspace\mathscr{N}}\negthinspace}
\newcommand{\ou}{%
  \mathrel{%
    \vcenter{\offinterlineskip
      \ialign{##\cr$<$\cr\noalign{\kern-1.5pt}$>$\cr}%
    }%
  }%
}
\begin{document}

\title{\LARGE{\bf ShieldNN: A Provably Safe NN Filter for Unsafe NN Controllers}} %
\author{James Ferlez\textsuperscript{$*\dagger$}, Mahmoud Elnaggar\textsuperscript{$**\dagger$}, Yasser Shoukry\textsuperscript{$*$}, and Cody Fleming\textsuperscript{$***$}\\
\textsuperscript{$*$}Electrical Engineering and Computer Science, University of California, Irvine \\
\textsuperscript{$**$}Electrical and Computer Engineering, University of Virginia\\
\textsuperscript{$***$}Mechanical Engineering, Iowa State University
\thanks{\textsuperscript{$\dagger$} Equally contributing first authors
} %
\thanks{This work was partially sponsored by the NSF awards \#CNS-2002405, \#CNS-2013824 and \#CPS-1739333.
}
} %

\maketitle

\begin{abstract}
In this paper, we develop a novel closed-form Control Barrier Function (CBF) 
and associated controller shield for the Kinematic Bicycle Model (KBM) with 
respect to obstacle avoidance. The proposed CBF and shield --- designed by an 
algorithm we call ShieldNN --- provide two crucial advantages over existing 
methodologies. First, ShieldNN considers steering and velocity constraints 
directly with the non-affine KBM dynamics; this is in contrast to more general 
methods, which typically consider only affine dynamics and do not guarantee 
invariance properties under control constraints. Second, ShieldNN provides a 
closed-form set of safe controls for each state unlike more general methods, 
which typically rely on optimization algorithms to generate a single 
instantaneous control for each state. Together, these advantages make ShieldNN 
uniquely suited as an  efficient Multi-Obstacle Safe Actions (i.e. 
multiple-barrier-function  shielding) during training time of a Reinforcement 
Learning (RL) enabled Neural Network controller. We show via experiments that 
ShieldNN dramatically increases the completion rate of RL training episodes in 
the  presence of multiple obstacles, thus establishing the value of ShieldNN in 
training RL-based controllers.
\end{abstract}


\maketitle


\section{Introduction}
\label{sec:introduction}





Control Barrier Functions (CBF) \cite{ames2019control} and the associated idea 
of controller shielding \cite{cheng2019end,BrunkeSafeLearningRobotics2022,hsu2023safety,marvi2021safe,cohen2023safe} have 
become important tools in the design of safety-critical control systems, 
especially those that incorporate learning-enabled components such as Neural 
Networks (NNs). However most of these techniques both assume affine dynamics  
(including with assumptions on relative degree) and use optimization problems 
to create shielding control actions, especially under control-input 
constraints. As a result, these methods add the computational cost of an  
additional optimization problem at each time step in order to apply their 
controller shields; this computational cost is particularly relevant when 
training a Reinforcement Learning (RL) based controller, whose training 
episodes ideally run much faster than real time. Moreover, incorporating an 
optimization problem into the shielding operation makes it difficult to reason 
about the availability of a safe control action in the multi-CBF case 
\cite{BreedenCompositionsMultipleControl2023, AaliMultipleControlBarrier2022}.



In this paper, we propose the ShieldNN algorithm, which mitigates these 
downsides in the application-specific case of obstacle avoidance for the 
Kinematic Bicycle Model (KBM) (the KBM in turn being a good approximation for 
four-wheeled vehicles \cite{KongKinematicDynamicVehicle2015}). In particular, 
the ShieldNN algorithm can produce a closed-form, verified CBF with an 
associated optimization-free NN-based controller shield. Indeed, from the 
ShieldNN-designed CBF, it is effectively possible to obtain a \emph{set} of 
safe controls for any particular state of the KBM dynamics. Thus, a 
ShieldNN-designed shield has two significant advantages. First, it is an ideal 
controller shield for RL-based controller training, since it does not incur the 
cost of an optimization problem at each time step. Second, it can be easily 
extended from one obstacle (one CBF) to multiple obstacles case (multiple CBFs, 
one for  each obstacle), since it effectively outputs closed form \emph{sets} 
of safe controls; if the intersection of these sets is non-empty, then it 
provides a safe control for all of the obstacles simultaneously.

The main theoretical contribution of this paper is the ShieldNN algorithm, 
which is in turn based on a novel parameterized class of CBFs for the KBM 
dynamics and obstacle avoidance (see \cref{sec:zbf_for_kbm_dynamics}). Thus, 
ShieldNN takes as its input a particular instance of the KBM dynamics; a 
desired safety radius around an obstacle to be avoided; and a user-specified 
hyperparameter $\sigma \in [0,1]$, which adjusts the  ``aggressiveness'' of the 
resulting CBF and shield. ShieldNN then produces a CBF and controller shield 
using the following two components:

		\begin{minipage}[t]{\linewidth}
		\parshape 2 3pt \dimexpr\linewidth-15pt 10pt \dimexpr\linewidth-20pt\relax
		\textbf{ShieldNN Verifier }(\cref{sec:verifier})\textbf{:}  
		\vphantom{$A^{A^A}$} This component verifies that the user-supplied KBM 
		and safety radius are compatible with the chosen hyperparameter to 
		indeed create a CBF. In the process, this component also verifies 
		useful design properties of the set of safe controls admitted by the 
		barrier function. \emph{These properties  admit an effective 
		characterization of a guaranteed-safe set of  controls.}
		\end{minipage}\\

		\begin{minipage}[t]{\linewidth}
		\parshape 2 3pt \dimexpr\linewidth-15pt 10pt \dimexpr\linewidth-20pt\relax
		\textbf{ShieldNN Synthesizer }(\cref{sec:synthesizer})\textbf{:} The 
		result of the verification component is a barrier function where a 
		crucial boundary between safe and non-safe controls is known to be 
		convex in a particular state (of the  KBM). \emph{Thus, it is possible 
		to soundly (and tightly) approximate the set of safe controls as 
		function of KBM state, and to do so using a ReLU NN that approximates 
		this boundary.} A further clipping operation yields a controller shield 
		that provably restricts the control inputs to the set of safe controls 
		specified by the CBF.
		\end{minipage}\\

We further extend ShieldNN from the case of creating safe control actions to 
avoid a single obstacle to creating safe control actions to avoid multiple  
obstacles simultaneously. This extension appears in \cref{sec:mosa}, and it 
follows from running multiple instances of the ShieldNN controller shield 
simultaneously: i.e. one instance per obstacle using the relative state for 
each obstacle. A safe actions can then be obtained by using an action that is 
safe for all controller shield instances (and treating the  absence of such 
actions as a run-time diagnostic of un-safety). We validate  the performance of 
this controller shield via experiments detailed in \cref{sec:experiments}.

\noindent \textbf{Related Work.} %
For an extensive and recent survey of the field of ``safety shields (or 
filters),'' we refer the reader 
to~\cite{GuReviewSafeReinforcement2024,BrunkeSafeLearningRobotics2022}. 
Compared to the techniques reported 
in~\cite{GuReviewSafeReinforcement2024,BrunkeSafeLearningRobotics2022}, our 
proposed algorithms provide several critical advantages. In particular, and 
unlike other techniques, ShieldNN aims to find closed-form characterization of 
the set of safe control actions. This set of safe control actions can then be  
directly mapped into neural network layers, hence bypassing the need to solve 
an optimization problem, which affects the real-time execution of the control 
pipeline. Moreover, the obtained characterization of the safe set of control 
inputs leads to a natural extension of handling multiple obstacles at once,  
which is an issue for several of the techniques reported 
in~\cite{GuReviewSafeReinforcement2024,BrunkeSafeLearningRobotics2022}.

\section{Preliminaries} 
\label{sec:prelims}

\subsection{Notation} 
\label{sub:notation}
Let $\mathbb{R}$ denote the real numbers, and let $\lVert \cdot \rVert$ denote 
the Euclidean norm on $\mathbb{R}^n$. A ($K$-layer) ReLU network, is a function 
$\nn = (L_{\theta^{(K)}} \circ L_{\theta^{(K-1)}} \circ \dots \circ 
L_{\theta^{(1)}})(x)$ where $L_{\theta} : \mathbb{R}^{\mathfrak{i}} \rightarrow 
\mathbb{R}^{\mathfrak{o}}$ with $z \mapsto \max\{ W z + b, 0 \}$. and the 
$\max$ is taken element-wise and $\theta^{(k)} = (W_k, b_k)$. 
$L_{\theta^{(K)}}$ lacks the $\max$ operation by convention.


\subsection{Control Barrier Functions} 
\label{sub:control_barrier_functions}
We introduce the following corollary, \cref{cor:feedback_control_action_set}, 
which serves as the foundation of controller shielding 
\cite{AlshiekhSafeReinforcementLearning2017}.
\begin{corollary}
\label{cor:feedback_control_action_set}
	Let $h: \mathbb{R}^n \rightarrow \mathbb{R}$ with $\mathcal{C}_h \triangleq 
	\{x \in \mathbb{R}^n | h(x) \geq 0\}$, and let $\mathcal{D} \subseteq 
	\mathbb{R}^n$ s.t. $\mathcal{C}_h \subseteq \mathcal{D}$. Further let 
	$\dot{x} = f(x,u)$ be a control system where  $f : \mathbb{R}^n \times 
	\Omega_\text{admis.} \rightarrow \mathbb{R}^n$ is Lipschitz continuous. 
	Finally, let $\alpha$ be a class $\mathcal{K}$ function. If the set
	\begin{equation}
			R_{h,\alpha}(x) \triangleq \{ u \negthinspace \in \Omega_\text{admis.} | \nabla_x^\text{T} h(x) \cdot \negthinspace f(x, u) + \alpha(h(x)) \negthinspace \geq 0 \}
	\end{equation}
	is non-empty for each $x \in \mathcal{D}$, and a  Lipschitz-continuous 
	feedback controller $\mu : x \mapsto u$ satisfies
	\begin{equation}
		\mu(x) \in R_{h,\alpha}(x) \quad \forall x \in \mathcal{D}
	\end{equation}
	then $\mathcal{C}_h$ is forward invariant for dynamics 
	$f(\cdot,\negthinspace \mu(\cdot))$.
\end{corollary}
\begin{proof}
	This follows directly from an application of zeroing barrier functions 
	\cite[Theorem 3]{XuRobustnessControlBarrier2015}.
\end{proof}




\section{Problem Formulation} 
\label{sec:problem}

%

We consider the kinematic bicycle model (KBM) as our dynamical system model, 
%
however, we consider the KBM described in terms of vehicle-to-obstacle relative 
position variables (e.g. measurable via LiDAR). That is, the distance to the 
obstacle, $\lVert \vec{r} \rVert$, and the angle of the vehicle 
with respect to the obstacle, $\xi$, are states in the dynamical system:
\begin{align}
	\left(
		\begin{matrix}
			\dot{r} \\
			\dot{\xi} \\
			\dot{v}
		\end{matrix}
	\right)
	\negthickspace
	&=
	f_\text{KBM}\left(\left(\begin{smallmatrix}r \\ \xi\\ v\end{smallmatrix}\right) \negthinspace, \negthinspace \left(\begin{smallmatrix}a \\ \beta\end{smallmatrix}\right)\right)
	\triangleq \negthickspace
	\left(
		\begin{smallmatrix}
			v \cos( \xi - \beta ) \\
			-\frac{1}{r} v \sin(\xi - \beta) - \frac{v}{\ell_r} \sin(\beta) \\
			a 
		\end{smallmatrix}
	\right) \notag\\
	\beta &\triangleq \tan^{-1}(\tfrac{\ell_r}{\ell_f + \ell_r} \tan(\delta_f)); \;\;
	\chi \triangleq \left(\begin{smallmatrix}r \\ \xi\\ v\end{smallmatrix}\right); \;\;
	\omega \triangleq \left(\begin{smallmatrix}a \\ \beta\end{smallmatrix}\right)
	\label{eq:kbm_dynamics}
\end{align}
\begin{figure}[t]
    \vspace{1.8mm}
	\centering 
	\includegraphics[width=0.32\textwidth,trim={0cm 0.8cm 0cm 0cm},clip]{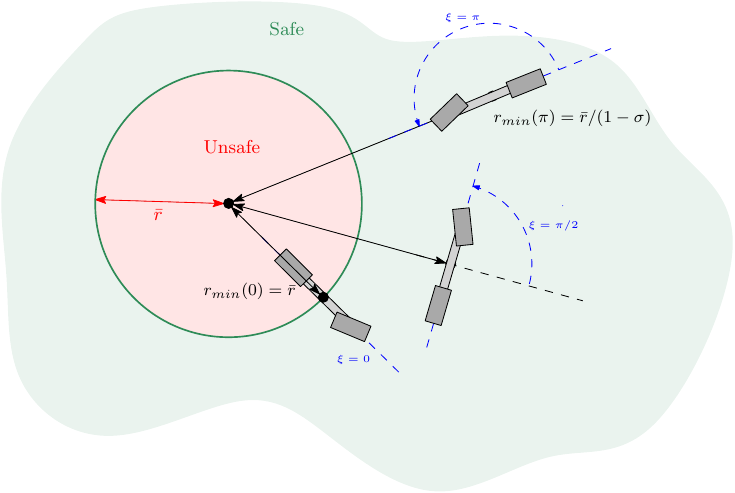} 
	\caption{Obstacle specification and minimum barrier distance as a function 
	of relative vehicle orientation, $\xi$. 
	} 
	\label{fig:obstacle_diagram}
\end{figure}  %
\noindent where $r(t) \triangleq \lVert \vec{r}(t) \rVert$; $a$ is the linear 
acceleration input; $\delta_f$ is the front-wheel steering angle 
input\footnote{That is the steering angle can be set instantaneously, and the 
dynamics of the steering rack can be ignored.}; and $\xi$ is the angle of the 
vehicle to the obstacle. For clarity, we note a few special cases: when $\xi = 
\pm \pi/2$, the vehicle is oriented tangentially to the obstacle, and when $\xi 
= \pi \text{ or } 0$, the vehicle is pointing directly at or away from the 
obstacle, respectively (see \cref{fig:obstacle_diagram}). $\beta$ is an 
intermediate quantity, an \emph{invertible function} of $\delta_f$.

We make the further assumption that the KBM has a control constraint on 
$\delta_f$ such that $\delta_f \in [-{\delta_f}_\text{max}, 
{\delta_f}_\text{max}]$. To simplify further notation, we will consider $\beta$ 
directly as a control variable; this is without loss of generality, since there 
is a bijection between $\beta$ and the actual steering control angle, 
$\delta_f$. Thus, $\beta$ is also constrained: $\beta \negthinspace \in 
\negthinspace [-\beta_\text{max}, \beta_\text{max}]$, so the control vector is 
constrained $\omega  \negthinspace \in \negthinspace \Omega_\text{admis.} 
\negthickspace \triangleq \negthickspace \mathbb{R} \times [-\beta_\text{max}, 
\beta_\text{max}]$. 





\begin{problem}
\label{prob:main_problem}
Consider a KBM vehicle with maximum steering angle ${\delta_f}_\text{max}$, 
length parameters $l_f = l_r$ and maximum velocity $v_\text{max}$\footnote{In 
our KBM model, this technically requires a feedback controller on $a$, but this 
won't affect our results. }. Consider also a disk-shaped region of radius 
$\bar{r}$ centered at the origin, $U = \{ x \in \mathbb{R}^2 : \lVert x \rVert 
\leq \bar{r} \}$.
	Find a set of safe initial conditions, $S_0$, and a ReLU NN:
	\begin{equation}
		\nn : (\chi, \omega) \mapsto \hat{\omega}
	\end{equation}
	such that for any globally Lipschitz continuous controller $\mu : \chi 
	\mapsto \omega \in \Omega_\text{admis.}$, the state feedback controller:
	\begin{equation}
	\label{eq:composed_controller}
		\nn(\chi, \mu(\chi)) : \chi \mapsto \hat{\omega}
	\end{equation}
	is guaranteed to prevent the vehicle from entering the unsafe region $U$ if 
	it was started from a state in $S_0$.
\end{problem}



\noindent We also consider the multi-obstacle extension of 
\cref{prob:main_problem}.

\begin{problem}
\label{prob:mosa}
Consider a KBM vehicle as in \cref{prob:main_problem}, but instead consider  
$P$ obstacles located at points $o_1, \dots, o_P \in \mathbb{R}^2$ and 
associated unsafe disks $U_p = \{x \in \mathbb{R}^2 : \lVert x - o_p \rVert 
\leq \bar{r}\}$, $p = 1, \dots, P$. Find a ReLU NN $\nn : (\chi, \omega) 
\mapsto \hat{\omega}$ and a safety monitoring function
\begin{equation}
	\label{eq:mosa_condition}
	\mathcal{S} : (\chi, \omega) \mapsto s \in \{0,1\}
\end{equation}
such that if $S(\chi, \omega) \negthinspace = \negthinspace 1$ along a 
trajectory of the KBM dynamics under state feedback controller $\nn(\chi, 
\mu(\chi)) : \chi \mapsto \hat{\omega}$, then the vehicle will not enter the 
unsafe set $\cup_{p =  1}^{P} U_p$.
\end{problem}

Our approach to solving \cref{prob:main_problem} is based on  
\cref{cor:feedback_control_action_set}, and analogous to other approaches to 
shielding \cite{AlshiekhSafeReinforcementLearning2017}. Thus, ShieldNN solves 
\cref{prob:main_problem} according to the following steps, using notation from 
\cref{cor:feedback_control_action_set} but dynamics \eqref{eq:kbm_dynamics}:


		\begin{minipage}[t]{\linewidth}
		\parshape 2 3pt \dimexpr\linewidth-15pt 10pt \dimexpr\linewidth-20pt\relax
		\textbf{(1) Design a Candidate Barrier Function }(\cref{sec:zbf_for_kbm_dynamics})\textbf{.} For 
		a function, $h$, to be a barrier function for a specific safety 
		property, its zero super-level set, $\mathcal{C}_h$, must be contained 
		in the set of safe states.
		\end{minipage}\\

		\begin{minipage}[t]{\linewidth}
		\parshape 2 3pt \dimexpr\linewidth-15pt 10pt \dimexpr\linewidth-20pt\relax
		\textbf{(2) Verify the Existence of Safe Controls }(\cref{sec:verifier})\textbf{.} 
		\emph{(ShieldNN Verifier)} Show that the set $R_{h,\alpha}(x)$ is 
		non-empty for each state $x \in \mathcal{C}_h$. This establishes that a 
		safe feedback controller may exist.
		\end{minipage}\\

		\begin{minipage}[t]{\linewidth}
		\parshape 2 3pt \dimexpr\linewidth-15pt 10pt \dimexpr\linewidth-20pt\relax
		\textbf{(3) Design a Safety Filter }(\cref{sec:synthesizer})\textbf{.} 
		\emph{(ShieldNN Synthesizer)} If possible, design $\nn_{\negthinspace 
		0}$ such that $\nn_{\negthinspace 0} : x \in \mathcal{C}_h \mapsto 
		\hat{u} \in R_{h,\alpha}(x)$; then obtain a safety filter as:
		\begin{equation}
			\nn(x,u) :=
			\begin{cases}
				u & \text{if } u \in R_{h,\alpha}(x) \\
				\nn_{\negthinspace 0}(x) & \text{if } u \not\in R_{h,\alpha}(x).
			\end{cases}
		\end{equation}
		\end{minipage}\\

Our approach to solving \cref{prob:mosa} (see \cref{sec:mosa}) is based on the 
availability of an estimate for the region $R_{h,\alpha}(x)$, as obtained from 
the ShieldNN CBF and controller shield (see above). In particular, we consider 
a ShieldNN-designed controller shield for each of the $P$ obstacles 
individually (after shifting the KBM dynamics so that  $o_p$ is the origin), 
which provides a per-obstacle set of set of safe controls,  
$R^{o_p}_{h,\alpha}(x)$, $p = 1, \dots, P$. Testing whether the intersection 
$\cap_{p=1}^{P} R^{o_p}_{h,\alpha}(x)$ is nonempty yields the desired safety 
monitor $S$, and choosing a control from such a nonempty intersection yields a 
safety filter.


 %




\section{Barrier Function(s) for the KBM Dynamics: the Basis of ShieldNN} 
\label{sec:zbf_for_kbm_dynamics}

We propose the following class of candidate barrier functions to certify 
control actions so that the vehicle 
doesn't get within $\bar{r}$ units 
of the origin (\cref{prob:main_problem}):
\begin{equation}
	\label{eq:bicycle_barrier}
	h_{\bar{r},\sigma}(\chi) = h_{\bar{r},\sigma}(\xi, r, v) = \frac{\sigma \cos(\xi/2) + 1-\sigma}{\bar{r}} - \frac{1}{r}
\end{equation}
where $\sigma \in (0,1)$ is an additional parameter whose function we shall 
describe subsequently. First note that the equation $h_{\bar{r},\sigma}(\chi) = 
0$ has a unique solution, $r_\text{min}(\xi)$ for each of $\xi$: 
\begin{equation}
\label{eq:r_min}
	r_\text{min}(\xi) = \bar{r}/( \sigma \cos(\xi/2) + 1-\sigma ),
\end{equation}
so the smallest value of $r_\text{min}$ is $r_\text{min}(0) = \bar{r}$. Thus, 
the function $h_{\bar{r},\sigma}$ satisfies the requirements of \textbf{(1)} in 
the ShieldNN framework: i.e. $\mathcal{C}_{h_{\bar{r},\sigma}}$, the zero 
super-level set of $h_{\bar{r},\sigma}$, is entirely contained in the set of 
safe states as proscribed by \cref{prob:main_problem}, independent of the 
choice of $\sigma$. See \cref{fig:obstacle_diagram}, which also depicts another 
crucial value, $r_\text{min}(\pm \pi) = \bar{r}/(1-\sigma)$. 

\begin{remark}
	Note that $h_{\bar{r},\sigma}$ is independent of the velocity state, $v$. 
	This will ultimately force ShieldNN filters to intervene only by altering 
	the steering input.  
\end{remark}
\begin{figure*}[!b]
\hrulefill
\normalsize
\setcounter{MYtempeqncnt}{\value{equation}}
\setcounter{equation}{12}
\begin{multline}
	\mathcal{L}_{\bar{r},\sigma,\ell_r}(\xi, \beta, v) \triangleq \Big[
	\nabla_\chi^\text{T} h_{\bar{r},\sigma}(\chi) \cdot f_\text{KBM}(\chi, \omega) 
		+ \alpha(h_{\bar{r},\sigma}(\chi)) \Big]_{\chi = (r_\text{min}(\xi),\xi,v)} \\
	= v \Big(
			\tfrac{\sigma}{2 \cdot \bar{r} \cdot r_\text{min}(\xi)} \sin(\tfrac{\xi}{2}) \sin(\xi \negthinspace - \negthinspace \beta)
			+ 
			\tfrac{\sigma}{2 \cdot \bar{r} \cdot \ell_r} \sin(\tfrac{\xi}{2}) \sin(\beta)
			+ \tfrac{1}{r_\text{min}(\xi)^2}\cos(\xi \negthinspace - \negthinspace \beta)
		\Big)
	\label{eq:zbf_full}
\end{multline}
\setcounter{equation}{9}
\vspace*{4pt}
\end{figure*}

A barrier function also requires a class $\mathcal{K}$ function, $\alpha$. For 
ShieldNN, we choose a linear function
\begin{equation}
\label{eq:extended_class_k}
	\alpha_{v_\text{max}}(x) = K \cdot v_\text{max} \cdot x
\end{equation}
where $v_\text{max}$ is the assumed maximum velocity (\cref{prob:main_problem}), and constant $K$ is a selected according to the 
following.
\begin{theorem}
\label{thm:safe_control_region_grows}
	Consider any fixed $\bar{r}$, $\ell_r$ and $\sigma$. Assume that $0 \leq v 
	\leq v_\text{max}$ (as specified by \cref{prob:main_problem}). If $K$ is  
	such that:  
	\begin{equation}
			K \geq K_{\bar{r},\sigma} \triangleq \max(\{1,1/\bar{r}\}) \cdot \left( \tfrac{\sigma}{2 \cdot \bar{r} } 
		+ 2 \right)
	\end{equation}
	then the Lie derivative $\nabla_\chi^\text{T} h_{\bar{r},\sigma}(x) \cdot 
	f_\text{KBM}(\chi, \omega) + \alpha(h_{\bar{r},\sigma}(\chi))$ 
	is a monotonically increasing function in $r$ for all $r \geq \bar{r}$ for 
	each fixed choice of $v \in (0,v_\text{max}]$ and the remaining state and 
	control variables. 

	In particular, for all $\chi \in \mathcal{C}_{h_{\bar{r},\sigma}}$ such 
	that $v \in (0, v_\text{max}]$ it is the case that: 
	\begin{equation}\label{eq:simple_safe_controls}
		R_{h_{\bar{r},\sigma}}((r_\text{min}(\xi),\xi,v)) \subseteq R_{h_{\bar{r},\sigma}}(\chi).
	\end{equation}
	where $R_{h_{\bar{r},\sigma}}$ has an analogous definition to that in  
	\cref{cor:feedback_control_action_set} (but with the $\alpha$ subscript 
	suppressed for brevity).
\end{theorem}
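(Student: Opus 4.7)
The plan is to establish the monotonicity claim by direct computation of $\partial \Phi/\partial r$, where $\Phi(\chi,\omega) \triangleq \nabla_\chi^\text{T} h_{\bar{r},\sigma}(\chi)\cdot f_\text{KBM}(\chi,\omega) + \alpha_{v_\text{max}}(h_{\bar{r},\sigma}(\chi))$ is the quantity whose non-negativity defines membership in $R_{h_{\bar{r},\sigma}}(\chi)$. First I would compute $\nabla_\chi h_{\bar{r},\sigma} = \bigl(-\sigma\sin(\xi/2)/(2\bar{r}),\,1/r^2,\,0\bigr)$; because the $v$-component vanishes, the acceleration input $a$ never enters $\Phi$, which conveniently foreshadows the remark that a ShieldNN filter can only intervene on the steering angle $\beta$. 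Substituting the KBM vector field from \eqref{eq:kbm_dynamics} and the linear class-$\mathcal{K}$ function \eqref{eq:extended_class_k}, the expression $\Phi$ breaks into four summands: $\sigma v\sin(\xi/2)\sin(\xi-\beta)/(2\bar{r}\,r)$, $\sigma v\sin(\xi/2)\sin(\beta)/(2\bar{r}\,\ell_r)$ (no $r$-dependence), $v\cos(\xi-\beta)/r^2$, and $Kv_\text{max}\,h_{\bar{r},\sigma}(\chi)$.

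Second, I would differentiate $\Phi$ with respect to $r$. After dropping the $\ell_r$-term and factoring out $1/r^2$, this gives
\begin{equation*}
\frac{\partial \Phi}{\partial r} = \frac{1}{r^2}\left[\,K v_\text{max} \;-\; \frac{\sigma v\sin(\xi/2)\sin(\xi-\beta)}{2\bar{r}} \;-\; \frac{2v\cos(\xi-\beta)}{r}\,\right].
\end{equation*}
To make the bracket non-negative uniformly in $\xi,\beta$ and in $r \geq \bar{r}$, $v \in (0,v_\text{max}]$, I would worst-case each subtracted term using $|\sin|,|\cos|\leq 1$, $v \leq v_\text{max}$, and $1/r \leq 1/\bar{r}$. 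This yields the sufficient condition $K \geq (\sigma/2 + 2)/\bar{r}$; a short case check then shows that $K_{\bar{r},\sigma}$ dominates it in both regimes --- for $\bar{r} \leq 1$ the outer factor $1/\bar{r}$ supplies the missing $1/\bar{r}$ inside, and for $\bar{r} > 1$ replacing $2/\bar{r}$ by $2$ only inflates the bound --- which establishes the monotonicity.

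Finally, \eqref{eq:simple_safe_controls} falls out directly. If $\chi = (\xi,r,v) \in \mathcal{C}_{h_{\bar{r},\sigma}}$, then $h_{\bar{r},\sigma}(\chi) \geq 0$ forces $r \geq r_\text{min}(\xi) \geq \bar{r}$ via \eqref{eq:r_min}. Fixing $\xi$, $v$, and $\omega$, the monotonicity of $r' \mapsto \Phi((\xi,r',v),\omega)$ on $[r_\text{min}(\xi),r]$ yields $\Phi(\chi,\omega) \geq \Phi((\xi,r_\text{min}(\xi),v),\omega)$, so $\omega \in R_{h_{\bar{r},\sigma}}((r_\text{min}(\xi),\xi,v))$ implies $\omega \in R_{h_{\bar{r},\sigma}}(\chi)$. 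The only real obstacle in the proof is the bookkeeping around the four trigonometric terms in $\Phi$ and the choice of how tightly to worst-case them; the paper's $K_{\bar{r},\sigma}$ appears to trade a bit of tightness for a single closed-form expression that uniformly covers both $\bar{r} \leq 1$ and $\bar{r} > 1$.
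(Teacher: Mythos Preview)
Your proposal is correct and follows essentially the same route as the paper: compute $\partial\Phi/\partial r$, bound the trigonometric terms by $1$ and $v$ by $v_\text{max}$, and verify that $K_{\bar{r},\sigma}$ meets the resulting sufficient condition; then deduce \eqref{eq:simple_safe_controls} by monotonicity on $[r_\text{min}(\xi),r]$. The only cosmetic difference is that the paper organizes the case split around $1/r^3$ versus $1/r^2$ (comparing $r$ to $1$), whereas you use $1/r\leq 1/\bar{r}$ directly to get the single bound $(\sigma/2+2)/\bar{r}$ and then check that $K_{\bar{r},\sigma}$ dominates it---your packaging is arguably cleaner, but the substance is identical.
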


\begin{proof}
	See \cref{sub:proof_of_thm:safe_control_region_grows} of 
	\cite{ferlez2024shieldnnprovablysafenn}.
\end{proof}

As a consequence of \cref{thm:safe_control_region_grows}, define the  
following.

\begin{definition}
	\label{def:deriv_condition_rmin}
	Define $\mathcal{L}_{\bar{r},\sigma,\ell_r}(\xi, \beta, v)$ according to 
	\cref{eq:zbf_full}. 
\end{definition}
\setcounter{equation}{13}

In addition to concretely defining our class of candidate barrier functions,  
\cref{thm:safe_control_region_grows} is the essential facilitator of the 
ShieldNN algorithm. In particular, note that the right-hand side of  
\eqref{eq:zbf_full} can be simplified as indicated,
since $h_{\bar{r},\sigma}((r_\text{min}(\xi),\xi,v)) = 0$ and 
$\alpha_{v_\text{max}}(0) = 0$. Hence, the set 
$R_{h_{\bar{r},\sigma}}\negthinspace ((r_\text{min}(\xi),\xi,v))$ is 
independent of $v$, so \eqref{eq:simple_safe_controls} gives a sufficient 
condition for safe controls \textbf{(2)} in terms of a single state variable, 
$\xi$, and a single control variable $\beta$. This simplifies not only the 
ShieldNN verifier but also the ShieldNN synthesizer, as we demonstrate in the 
next section.



\begin{figure}[tp]
    \centering
    \begin{subfigure}[p]{0.46\linewidth}
		\vspace{1.95mm}
		\includegraphics[width=0.89\linewidth,trim={1.1cm 0.1cm 0cm 0.7cm},clip]{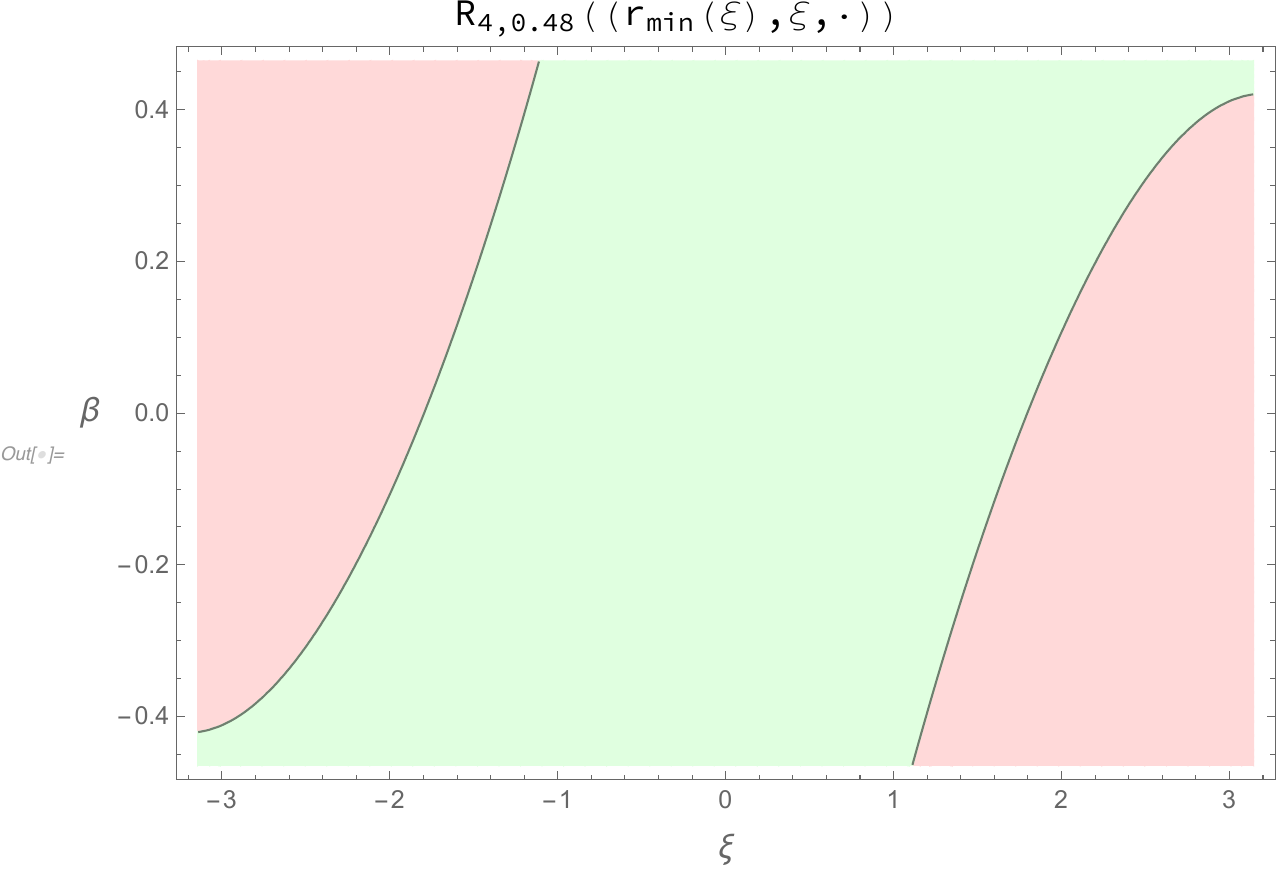}%
	\vspace{-1.5mm} %
	\caption{(Un)Safe steering controls. $\cup_\xi R_{\scriptscriptstyle 4,.48}((r_\text{min}\negthinspace(\xi),\negthinspace\xi,\negthinspace\cdot))$
	in light green; $\mathfrak{l}$ and $\mathfrak{u}$ in dark green.}
	\label{fig:practical_safe_controls}
	\end{subfigure}
	%
	\hspace{2pt}
	\begin{subfigure}[p]{0.46\linewidth}
		\includegraphics[width=1.02\linewidth,trim={1.1cm 0.15cm 1.5cm 1.2cm},clip]{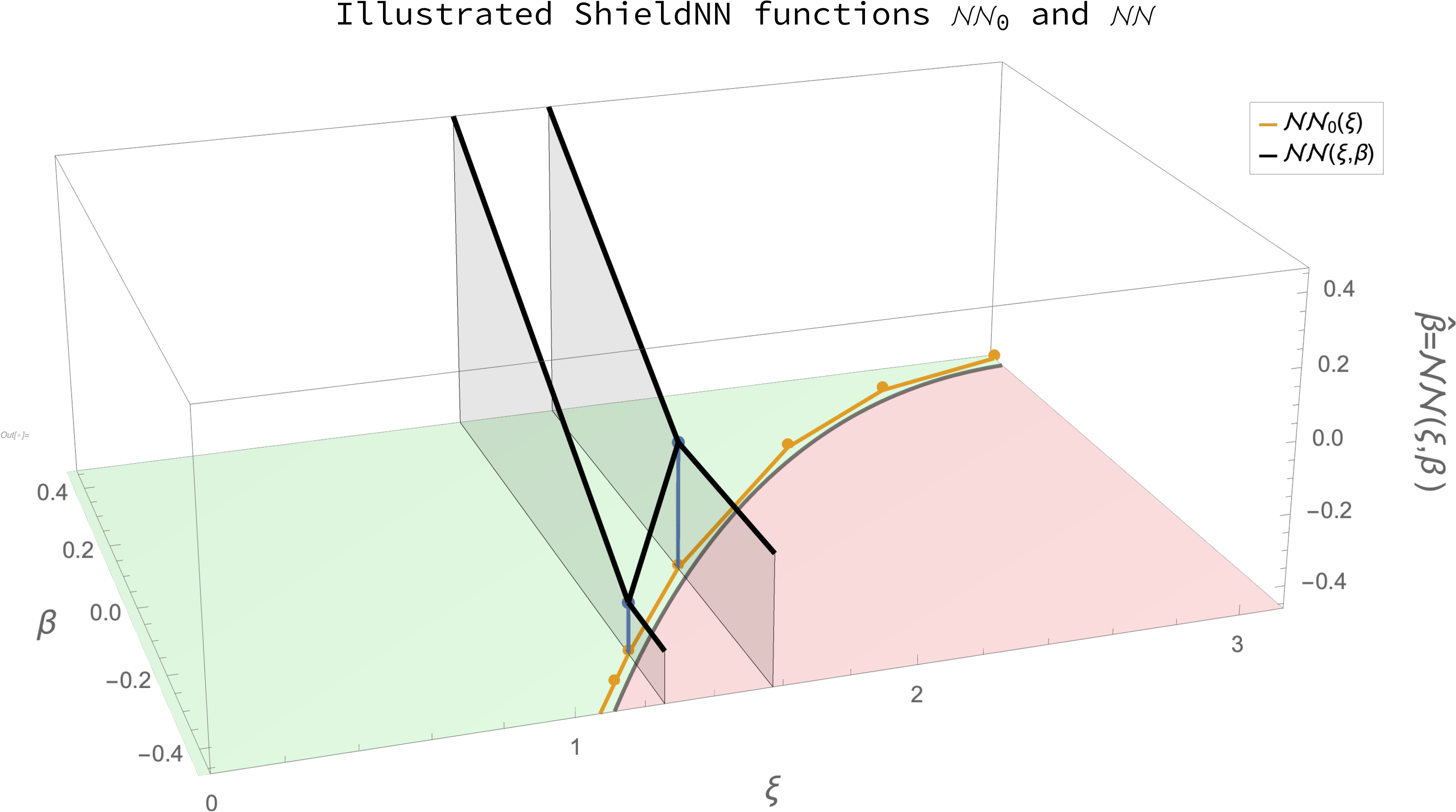}
		\caption{ %
		$\nn_{\negthinspace 0}$ (orange) and two constant-$\xi$ slices of the final ShieldNN filter, $\nn$ (black).
		} %
		\label{fig:shieldnn_design}
	\end{subfigure}
	\caption{Illustrated ShieldNN products for $\ell_f = \ell_r = 2 \;\text{m}$, $\bar{r} = 4 \;\text{m}$, $\beta_\text{max} = 0.4636$, $\sigma = 0.48$.}
\end{figure}  %

\section{ShieldNN Verifier} %
\label{sec:verifier}
The overall ShieldNN algorithm has three inputs: the specs for a KBM vehicle 
($\ell_f=\ell_r$, $\delta_{f,\text{max}}$ and $v_\text{max}$); the desired 
safety radius ($\bar{r}$); and the barrier parameter $\sigma$. Thus, the 
objective of the ShieldNN verifier is to soundly verify that these parameter 
values make equation \eqref{eq:bicycle_barrier} a true barrier function for 
\cref{prob:main_problem}.  From \cref{thm:safe_control_region_grows}, it 
suffices to show that 
$R_{h_{\bar{r},\sigma}}\negthinspace((r_\text{min}(\xi),\xi,\cdot)) \neq  
\emptyset$ for each $\xi \in [-\pi,\pi]$; the barrier property then follows  by 
\cref{cor:feedback_control_action_set}.

However, the ShieldNN verifier soundly verifies that each  
$R_{h_{\bar{r},\sigma}}\negthinspace((r_\text{min}(\xi),\xi,\cdot))$ is 
nonempty by soundly verifying that these sets have certain structural  
properties. In particular, ShieldNN verifies that for each orientation angle, 
$\xi$, the set 
$R_{h_{\bar{r},\sigma}}\negthinspace((r_\text{min}(\xi),\xi,\cdot))$ is an 
\emph{interval} whose bounds are a continuous function of $\xi$ and also 
clipped at the max/min steering inputs: i.e.
\begin{equation}
	R_{h_{\bar{r},\sigma}}\negthinspace ((r_\text{min}(\xi),\xi,\negthinspace\cdot))
	\negthinspace = \negthinspace
	\big[ \negthinspace
		\max\{ -\beta_\text{max}, \negthinspace \mathfrak{l}(\xi) \},
		\min\{  \beta_\text{max}, \negthinspace \mathfrak{u}(\xi) \} 
	\big]
	\label{eq:safe_control_intervals}
\end{equation}
where $\mathfrak{l}$ and $\mathfrak{u}$ are continuous functions of $\xi$.  
Moreover, ShieldNN verifies that the function $\mathfrak{l}$ (resp.  
$\mathfrak{u}$) is \emph{concave} (resp. convex); indeed, the symmetry of the 
problem dictates that $\mathfrak{u}(\xi) = -\mathfrak{l}(-\xi)$. From these  
structural properties, it becomes straightforward to establish 
$R_{h_{\bar{r},\sigma}}\negthinspace((r_\text{min}(\xi),\xi,\cdot)) \neq  
\emptyset$, and hence the desired conclusion as described above.

We reiterate that the ShieldNN verifier is only \emph{sound}, and so may  fail 
to verify the structural properties described above. However, this has not been 
observed in practice. See \cref{fig:practical_safe_controls} for an 
illustrative example with parameters $\ell_f = \ell_r = 2 \;\text{m}$, $\bar{r} 
= 4 \;\text{m}$, $\beta_\text{max} = 0.4636$ and $\sigma = 0.48$; $\cup_{\xi\in 
[-\pi,\pi]}R_{h_{\bar{r},\sigma}} \negthinspace 
((r_\text{min}(\xi),\xi,\cdot))$ is shown in light green, and $\mathfrak{l}$ 
and $\mathfrak{u}$ are shown in dark green.



\subsection{Algorithmic Verification of~\eqref{eq:safe_control_intervals}} 
Recall that the main function of the ShieldNN verifier is to soundly verify 
that equation~\eqref{eq:safe_control_intervals} holds
for a concave function $\mathfrak{l}$ and with $\mathfrak{u}(\xi) = 
-\mathfrak{l}( -\xi )$. The conclusion about $\mathfrak{u}$ follows directly 
from the symmetry of the problem, so we focus on the claims for $\mathfrak{l}$.

We now make the following observation.

\begin{proposition}
Suppose that \eqref{eq:safe_control_intervals} holds with $\mathfrak{u}(\xi) = 
-\mathfrak{l}(-\xi)$, and let $\mathcal{L}_{\bar{r},\sigma,\ell_r}$ be as in  
\cref{def:deriv_condition_rmin}. Then for any $\xi^\prime \in [-\pi,\pi]$ such that 
$\mathfrak{l}(\xi^\prime) \in (-\beta_\text{max},\beta_\text{max})$ it is the 
case that
\begin{equation}\label{eq:main_functional_motivator}
	\mathcal{L}_{\bar{r},\sigma,\ell_r}(\xi^\prime, \mathfrak{l}(\xi^\prime), \cdot) = 0.
\end{equation}
\end{proposition}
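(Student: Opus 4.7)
\emph{Plan.} The strategy is to exploit the fact that $\mathfrak{l}(\xi^\prime)$ is the left endpoint of a sublevel set defined by a non-strict inequality on a function that is continuous in $\beta$. First I would observe that because $h_{\bar{r},\sigma}((r_\text{min}(\xi),\xi,v)) = 0$ by construction of $r_\text{min}$, the defining condition of $R_{h_{\bar{r},\sigma}}((r_\text{min}(\xi^\prime),\xi^\prime,v))$ from \cref{cor:feedback_control_action_set} reduces (after dropping the $\alpha_{v_\text{max}}$ contribution and the unconstrained acceleration slot) to the single scalar inequality $\mathcal{L}_{\bar{r},\sigma,\ell_r}(\xi^\prime,\beta,v) \geq 0$ on $\beta \in [-\beta_\text{max}, \beta_\text{max}]$. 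Thus the set of admissible steering values in $R_{h_{\bar{r},\sigma}}((r_\text{min}(\xi^\prime),\xi^\prime,\cdot))$ is exactly the superlevel set $\{\beta : \mathcal{L}_{\bar{r},\sigma,\ell_r}(\xi^\prime,\beta,v)\ge 0\}$ intersected with $[-\beta_\text{max}, \beta_\text{max}]$.

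Next I would invoke the hypothesis \eqref{eq:safe_control_intervals}, which identifies this intersection with $[\max\{-\beta_\text{max}, \mathfrak{l}(\xi^\prime)\}, \min\{\beta_\text{max}, \mathfrak{u}(\xi^\prime)\}]$. The assumption $\mathfrak{l}(\xi^\prime) \in (-\beta_\text{max},\beta_\text{max})$ then has two effects: it forces the left endpoint of this interval to be $\mathfrak{l}(\xi^\prime)$ itself (not the clip at $-\beta_\text{max}$), and it places $\mathfrak{l}(\xi^\prime)$ strictly inside the admissible range, so that one can approach it from the left without leaving $[-\beta_\text{max}, \beta_\text{max}]$.

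From here the proof becomes a two-sided squeeze. Membership of $\mathfrak{l}(\xi^\prime)$ in the safe set directly gives $\mathcal{L}_{\bar{r},\sigma,\ell_r}(\xi^\prime, \mathfrak{l}(\xi^\prime), v) \geq 0$. For the reverse inequality, I would pick any sequence $\beta_n \uparrow \mathfrak{l}(\xi^\prime)$ with $\beta_n \in [-\beta_\text{max}, \mathfrak{l}(\xi^\prime))$; the strict gap $\mathfrak{l}(\xi^\prime) > -\beta_\text{max}$ guarantees such a sequence exists. Every $\beta_n$ is admissible but lies strictly to the left of the safe interval, so $\mathcal{L}_{\bar{r},\sigma,\ell_r}(\xi^\prime,\beta_n,v) < 0$; continuity of $\mathcal{L}_{\bar{r},\sigma,\ell_r}$ in $\beta$ (manifest from \eqref{eq:zbf_full}) and passing $n\to\infty$ then give $\mathcal{L}_{\bar{r},\sigma,\ell_r}(\xi^\prime, \mathfrak{l}(\xi^\prime), v) \leq 0$.

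The two inequalities combine to give equality at a single $v$. Since $\mathcal{L}_{\bar{r},\sigma,\ell_r}(\xi,\beta,v)$ depends on $v$ only as an outside multiplicative factor (see \eqref{eq:zbf_full}), this equality propagates to every admissible $v$, which justifies writing $\mathcal{L}_{\bar{r},\sigma,\ell_r}(\xi^\prime, \mathfrak{l}(\xi^\prime), \cdot) = 0$ as in the statement. I do not anticipate any serious obstacle; the only detail to track carefully is that the hypothesis $\mathfrak{l}(\xi^\prime) > -\beta_\text{max}$ is precisely what is needed to perform the one-sided approach within the admissible set so that the strict negativity on the approaching $\beta_n$ is genuinely available.
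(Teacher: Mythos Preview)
Your proposal is correct and follows essentially the same approach as the paper: both rely on the observation that at $r_\text{min}(\xi^\prime)$ the barrier vanishes so the $\alpha_{v_\text{max}}$ term drops, reducing membership in $R_{h_{\bar{r},\sigma}}$ to the sign of $\mathcal{L}_{\bar{r},\sigma,\ell_r}$, after which $\mathfrak{l}(\xi^\prime)$ is the boundary point of the superlevel set. The paper's proof is a one-line appeal to this definition, whereas you spell out the continuity/limit argument that pins the value to zero at the endpoint; your version is simply a more detailed execution of the same idea.
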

\begin{proof}
	This follows from the definition of $R_{h_{\bar{r},\sigma}}$, and the fact 
	that we are evaluating it \emph{on} the barrier, i.e. for $\chi^\prime = 
	(r_\text{min}(\xi^\prime),\xi^\prime,v)$. Thus, $h(\chi^\prime) = 0$, and   
	$\alpha_{v_\text{max}}(h(\chi^\prime)) = 0.$
\end{proof}

This suggests that  \eqref{eq:main_functional_motivator} can be used to 
establish the claim in \eqref{eq:safe_control_intervals}. Thus, let $a<b$ be 
real numbers, and define: 
\begin{equation}
	\mathsf{bd}_{[a,b]} \negthinspace \triangleq \negthinspace \{ (\xi^\prime\negthinspace,\beta^\prime) \negthinspace \in \negthinspace [a,\negthinspace b] \negthinspace \times \negthinspace [-\beta_\text{max}, \beta_\text{max}]
	|
	\mathcal{L}_{\bar{r},\sigma,\ell_r}(\xi^\prime, \beta^\prime, \cdot) \negthinspace = \negthinspace 0
	\}
\end{equation}
with the appropriate modifications for other interval types $(a,b)$, $(a,b]$ 
and $[a,b)$. We also define a related quantity:
\begin{equation}
	\text{dom}(\mathsf{bd}_{[a,b]}) = \{\xi \in [a,b] \; | \; \exists \beta . (\xi,\beta) \in \mathsf{bd_{[a,b]}}\}.
\end{equation}
We thus develop a sound algorithm to verify \eqref{eq:safe_control_intervals} 
and the concavity of $\mathfrak{l}$ by soundly verifying these properties in  
order:

	\begin{minipage}[t]{\linewidth}
	\parshape 2 3pt \dimexpr\linewidth-15pt 10pt \dimexpr\linewidth-20pt\relax
	\textbf{Property 1.} %
	Show that $\mathsf{bd}_{[-\pi,\pi]} \cap \big([-\pi,\pi] \times 
	\{-\beta_\text{max}\}\big) = \{ (\xi_0, \beta_\text{max}) \}$; that is 
	$\mathsf{bd}_{[-\pi,\pi]}$ intersects the lower control constraint a single 
	orientation angle, $\xi_0$. Likewise $\mathsf{bd}_{[-\pi,\pi]} \cap 
	\big([-\pi,\pi] \negthinspace \times \negthinspace {\beta_\text{max}}\big)  
	\negthinspace = \negthinspace \{ \negthinspace (-\xi_0, \beta_\text{max}) 
	\negthinspace \}$ by symmetry. 
	\end{minipage}\\

	\begin{minipage}[t]{\linewidth}
	\parshape 2 3pt \dimexpr\linewidth-15pt 10pt \dimexpr\linewidth-20pt\relax
	\textbf{Property 2.} %
	Verify that $\mathsf{bd}_{[\xi_0,\pi]}$ is the graph of a function 
	(likewise for $\mathsf{bd}_{[-\pi,-\xi_0]}$ by symmetry), and  
	$\mathsf{bd}_{(-\xi_0,\xi_0)} = \emptyset$. Thus, define $\mathfrak{l}$ as 
	$\text{graph}(\mathfrak{l}) \triangleq \mathsf{bd}_{[\xi_0,\pi]}$. 
	\end{minipage}\\

	\begin{minipage}[t]{\linewidth}
	\parshape 2 3pt \dimexpr\linewidth-15pt 10pt \dimexpr\linewidth-20pt\relax
	\textbf{Property 3.} %
	Verify that $\mathfrak{l}$ in \textbf{Property 2} is concave.
	\end{minipage} \\

The ShieldNN verifier expresses each of these properties as the sound 
verification that a particular function is greater than zero on a subset of its 
domain. The functions that are associated with these properties are either 
$\mathcal{L}_{\bar{r},\sigma,\ell_r}$ itself or else derived from it (i.e. by 
differentiating), so each is an analytic function where $\xi$ and $\beta$ 
appear only in trigonometric functions. These surrogate verification problems 
are amenable to over-approximation and the Mean-Value Theorem.

With this program in mind, \cref{ssub:property_1} - \cref{ssub:property_3} 
explain how to express \textbf{Property 1}-\textbf{3} as minimum-verification 
problems. \cref{ssub:certifymin} describes main algorithmic component of the 
ShieldNN verifier, \texttt{CertifyMin}.

\subsection{Verifying Property 1} 
\label{ssub:property_1}
To verify \textbf{Property 1}, we can start by using a numerical root finding 
algorithm to find a zero of $\mathcal{L}_{\bar{r},\sigma,\ell_r}(\xi, -\beta_\text{max}, \cdot)$, 
viewed as a function of $\xi$ (\cref{def:deriv_condition_rmin}). However, there 
is no guarantee that this root, call it $\hat{\xi}_0$ is the only root on the 
set $[-\pi,\pi] \times \{-\beta_\text{max}\}$. In this case, the property to be 
verified reduces to the following.

\begin{proposition}\label{prop:single_zero}
	Suppose that $\mathcal{L}_{\bar{r},\sigma,\ell_r}(\hat{\xi}_0, 
	-\beta_\text{max}, \cdot) = 0$  (\cref{def:deriv_condition_rmin}). Also, 
	suppose that there exists an $\epsilon > 0$ such that:

	\begin{enumerate}
		\item $\forall \xi \in [-\pi,\hat{\xi}_0-\epsilon] \; . \; 
			\mathcal{L}_{\bar{r},\sigma,\ell_r}(\hat{\xi}_0-\epsilon, -\beta_\text{max}, \cdot) > 0$;

		\item  
			$\mathcal{L}_{\bar{r},\sigma,\ell_r}\negthinspace(\hat{\xi}_0-\epsilon, 
			-\beta_\text{max}, \cdot) > 0$ and 
			$\mathcal{L}_{\bar{r},\sigma,\ell_r}\negthinspace(\pi, 
			-\beta_\text{max}, \cdot) \negthinspace < \negthinspace 0$;

		\item $\forall \xi \in [\hat{\xi}_0 - \epsilon, \pi] \;.\; 
			\tfrac{\partial^2}{\partial \xi^2} \mathcal{L}_{\bar{r},\sigma,\ell_r}(\xi, 
			-\beta_\text{max}, \cdot) > 0$.
	\end{enumerate}
	Then $\hat{\xi}_0$ is the only root of $\mathcal{L}(\xi, -\beta_\text{max}, 
	\cdot)$ on $[-\pi,\pi] \times \{-\beta_\text{max}\}$. That is 
	\textbf{Property 1} is verified.
\end{proposition}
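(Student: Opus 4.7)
The plan is to split the interval $[-\pi, \pi]$ at $\hat{\xi}_0 - \epsilon$ and discharge each subinterval using a different hypothesis. On $[-\pi, \hat{\xi}_0 - \epsilon]$, hypothesis \emph{(i)} gives $\mathcal{L}(\xi, -\beta_\text{max}, \cdot) > 0$ pointwise, so no root of $\mathcal{L}(\cdot, -\beta_\text{max}, \cdot)$ lies in this subinterval. All of the remaining work then takes place on $[\hat{\xi}_0 - \epsilon, \pi]$.

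On that remaining subinterval I would let $g(\xi) \triangleq \mathcal{L}(\xi, -\beta_\text{max}, \cdot)$ and combine the strict convexity from \emph{(iii)} with the boundary sign data from \emph{(ii)}. Existence of a root on $[\hat{\xi}_0 - \epsilon, \pi]$ is immediate from the intermediate value theorem applied to the continuous function $g$, since $g(\hat{\xi}_0 - \epsilon) > 0$ and $g(\pi) < 0$. For uniqueness, I argue by contradiction: suppose $g$ had two distinct roots $x_1 < x_2$ in $[\hat{\xi}_0 - \epsilon, \pi]$. Since $g(\pi) < 0$ we cannot have $x_2 = \pi$; hence $x_2 < \pi$, and writing $x_2 = \lambda x_1 + (1-\lambda)\pi$ for some $\lambda \in (0,1)$, strict convexity gives $0 = g(x_2) < \lambda g(x_1) + (1-\lambda) g(\pi) = (1-\lambda) g(\pi)$, which forces $g(\pi) > 0$ and contradicts \emph{(ii)}. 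Hence $g$ has exactly one root on $[\hat{\xi}_0 - \epsilon, \pi]$, and since $g(\hat{\xi}_0) = 0$ by assumption and $\hat{\xi}_0 \in [\hat{\xi}_0 - \epsilon, \pi]$, that unique root must be $\hat{\xi}_0$.

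Combining the two subintervals, $\hat{\xi}_0$ is the only zero of $\mathcal{L}(\cdot, -\beta_\text{max}, \cdot)$ on $[-\pi, \pi]$, which is exactly Property 1. The proposition has been structured precisely so that verifying its hypotheses reduces to items easily delegated to the later \texttt{CertifyMin} routine: a numerical root finder to produce $\hat{\xi}_0$, two pointwise sign checks (at $\hat{\xi}_0 - \epsilon$ and at $\pi$), and two bulk sign inequalities (positivity of $\mathcal{L}(\cdot, -\beta_\text{max}, \cdot)$ on $[-\pi, \hat{\xi}_0 - \epsilon]$ and positivity of $\tfrac{\partial^2}{\partial \xi^2} \mathcal{L}(\cdot, -\beta_\text{max}, \cdot)$ on $[\hat{\xi}_0 - \epsilon, \pi]$). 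The only genuinely non-bookkeeping step in the proof itself is the convexity-plus-boundary-signs uniqueness argument above, and that is the main thing the proof must get right; everything else is either immediate from hypothesis or deferred to the numerical verifier.
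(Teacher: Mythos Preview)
Your proof is correct and follows essentially the same approach as the paper: use \emph{(i)} to exclude roots on $[-\pi,\hat{\xi}_0-\epsilon]$, then combine the strict convexity from \emph{(iii)} with the endpoint signs from \emph{(ii)} to force uniqueness of the root on $[\hat{\xi}_0-\epsilon,\pi]$. Your convexity contradiction (writing a second putative root as a convex combination of the first root and $\pi$) is a slightly cleaner variant of the paper's chord-below-the-graph argument, but the idea is the same.
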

\begin{proof}
	If \emph{(i)} is true, then no zeros of 
	$\mathcal{L}_{\bar{r},\sigma,\ell_r}$ lie in $[-\pi, \hat{\xi}_0-\epsilon]$.

	If \emph{(iii)} is true, then $\mathcal{L}_{\bar{r},\sigma,\ell_r}(\xi, 
	-\beta_\text{max}, \cdot)$ is a convex function of $\xi$ on the interval 
	$[\hat{\xi}_0, \pi]$. But if \emph{(ii)} is also true, then $\hat{\xi}_0$ 
	must be the only zero of $\mathcal{L}_{\bar{r},\sigma,\ell_r}$ on the same 
	interval. This follows by contradiction from the assertion of convexity. If 
	there were another zero on $(\hat{\xi}_0, \pi]$, then the line connecting 
	$(\hat{\xi}_0, \mathcal{L}_{\bar{r},\sigma,\ell_r}(\hat{\xi}_0, 
	-\beta_\text{max}, \cdot))$ and $(\pi, 
	\mathcal{L}_{\bar{r},\sigma,\ell_r}(\pi, -\beta_\text{max}, \cdot))$ would 
	lie below this point by assumption \emph{(ii)}, hence contradicting 
	convexity. A similar argument can be made if there were a zero on 
	$[\hat{\xi}_0-\epsilon, \hat{\xi}_0)$.
\end{proof}

Crucially, the conditions \emph{(i)}-\emph{(iii)} of \cref{prop:single_zero} 
are conditions that can be checked either by verifying that a function is 
greater than 0 on an interval (as for \emph{(ii)} and \emph{(iii)}), or else  
$\mathcal{L}_{\bar{r},\sigma,\ell_r}$ has a particular sign for particular 
inputs (as in \emph{(i)}). Thus, ShieldNN verifier can establish 
\textbf{Property 1} by means of the \texttt{CertifyMin} function that we 
propose later.

\subsection{Verifying Property 2} 
\label{ssub:property_2}
Our verification of \textbf{Property 2} depends on the conclusion of 
\textbf{Property 1}. In particular, let $\xi_0 = \hat{\xi}_0$ be the single 
root of $\mathcal{L}_{\bar{r},\sigma,\ell_r}$ on $\big([-\pi,\pi] \times 
\{-\beta_\text{max}\}\big)$ as verified above. As before, the proposition below 
gives sufficient conditions to assert \textbf{Property 2}, and where verifying 
those conditions requires at worst checking the sign of some 
$\mathcal{L}_{\bar{r},\sigma,\ell_r}$-derived function on an interval (or 
rectangle).

The main technique for proving that $\mathsf{bd}_{[\xi_0,\pi]}$ is the graph of 
a function is to note that constant-level curves of 
$\mathcal{L}_{\bar{r},\sigma,\ell_r}$ are solutions to the ODE defined by its 
gradient. In particular, then, $\mathsf{bd}_{[\xi_0, \pi]}$ contains such a 
solutions in the rectangle of interest, since it is a subset of the 
zero-constant level curve of $\mathcal{L}_{\bar{r},\sigma,\ell_r}$. Thus, we 
can verify the desired properties of $\mathsf{bd}_{[\xi_0, \pi]}$ by 
considering the aforementioned ODE, and demonstrating that it has only one 
solution in the rectangle of interest. 
\begin{proposition}
\label{prop:functional_verification}
	Let $\xi_0$ be as above. Now suppose the following two conditions are 
	satisfied:
	\begin{enumerate}
		\item $\tfrac{\partial}{\partial \xi} 
			\mathcal{L}_{\bar{r},\sigma,\ell_r}(\xi_0,-\beta_\text{max}, \cdot) > 0$;

		\item for all $(\xi,\beta) \in \big([\xi_0, \pi] \times 
			[-\beta_\text{max}, \beta_\text{max}]\big)$ it is true that 
			\begin{equation}
			\label{eq:first_verification_prop}
				\tfrac{\partial}{\partial \beta} \mathcal{L}_{\bar{r},\sigma,\ell_r}(\xi,\beta, \cdot) < 0; \text{ and}
			\end{equation}

		\item there exists $\epsilon > 0$ and $\hat{\beta}_0 \in 
			[-\beta_\text{max}, \beta_\text{max}]$ such that

			\begin{enumerate}
				\item $\forall \xi \in 
					[-\beta_\text{max},\hat{\beta}_0-\epsilon] \; . \; 
					\mathcal{L}_{\bar{r},\sigma,\ell_r}(\pi,\hat{\beta}_0-\epsilon, \cdot) < 0$;


				\item $\forall \beta \in [\hat{\beta}_0 - \epsilon, 
					\beta_\text{max}] \;.\; \tfrac{\partial}{\partial \beta} 
					\mathcal{L}_{\bar{r},\sigma,\ell_r}(\pi, \beta, \cdot) > 0$.
			\end{enumerate}
	\end{enumerate}
	Then $\mathsf{bd}_{[\xi_0, \pi]}$ is the graph of a function, and we can 
	define the function $\mathfrak{l}$ on $[\xi_0, \pi]$ by 
	$\text{graph}(\mathfrak{l}) \triangleq \mathsf{bd}_{[\xi_0,\pi]}$.
\end{proposition}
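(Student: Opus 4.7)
The strategy is to apply the intermediate value theorem together with standard monotonicity and a mild use of the implicit function theorem. The key observation, supplied by condition \emph{(ii)}, is that for each fixed $\xi \in [\xi_0,\pi]$ the map $\beta \mapsto \mathcal{L}_{\bar{r},\sigma,\ell_r}(\xi,\beta,\cdot)$ is strictly decreasing on $[-\beta_\text{max},\beta_\text{max}]$. Consequently at most one $\beta^{\star} \in [-\beta_\text{max},\beta_\text{max}]$ can satisfy $\mathcal{L}(\xi,\beta^{\star},\cdot) = 0$, which already forces $\mathsf{bd}_{[\xi_0,\pi]}$ to be the graph of a partial function $\mathfrak{l}$ on some subset of $[\xi_0,\pi]$. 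All that remains is to show this partial function is total.

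For totality, the strict monotonicity in $\beta$ plus the IVT reduces the problem to verifying, for every $\xi \in [\xi_0,\pi]$,
\[
\mathcal{L}(\xi,-\beta_\text{max},\cdot) \;\geq\; 0 \;\geq\; \mathcal{L}(\xi,\beta_\text{max},\cdot).
\]
The lower inequality I would obtain by combining condition \emph{(i)} with the conclusion of \textbf{Property 1}: since $\mathcal{L}(\xi_0,-\beta_\text{max},\cdot) = 0$ and $\tfrac{\partial}{\partial\xi}\mathcal{L}(\xi_0,-\beta_\text{max},\cdot) > 0$, the function $\mathcal{L}(\cdot,-\beta_\text{max},\cdot)$ is strictly positive immediately to the right of $\xi_0$; and because \textbf{Property 1} already certified $\xi_0$ as the \emph{only} zero of $\mathcal{L}(\cdot,-\beta_\text{max},\cdot)$ on $[-\pi,\pi]$, continuity prevents this function from returning to zero or becoming negative on $(\xi_0,\pi]$.

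The upper inequality is where condition \emph{(iii)} enters, and this is the main obstacle of the proof: unpacking \emph{(iii)} to pin down the sign of $\mathcal{L}(\pi,\beta_\text{max},\cdot)$. The intent of \emph{(I)}--\emph{(II)} appears to be the analogue of \textbf{Property 1} along the edge $\{\pi\}\times[-\beta_\text{max},\beta_\text{max}]$, establishing that $\hat{\beta}_0$ is the unique zero of $\mathcal{L}(\pi,\cdot,\cdot)$ and that $\mathcal{L}(\pi,\beta_\text{max},\cdot) \leq 0$. Once this is in hand, I would propagate the sign backward along $[\xi_0,\pi]$ by contradiction: if $\mathcal{L}(\xi_1,\beta_\text{max},\cdot) > 0$ for some $\xi_1 \in [\xi_0,\pi]$, continuity would force an intermediate $\xi^{\star}$ with $\mathcal{L}(\xi^{\star},\beta_\text{max},\cdot) = 0$, placing $(\xi^{\star},\beta_\text{max}) \in \mathsf{bd}$; but the symmetry $\mathfrak{u}(\xi) = -\mathfrak{l}(-\xi)$ together with \textbf{Property 1} localizes any such top-edge crossing at $\xi = -\xi_0 \notin [\xi_0,\pi]$, a contradiction.

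With both sign conditions verified on $[\xi_0,\pi]$, pointwise application of IVT to the strictly decreasing map $\beta \mapsto \mathcal{L}(\xi,\beta,\cdot)$ yields a unique $\mathfrak{l}(\xi) \in [-\beta_\text{max},\beta_\text{max}]$ for each $\xi$, and the implicit function theorem (applicable wherever $\tfrac{\partial}{\partial \beta}\mathcal{L} \neq 0$, which is everywhere by \emph{(ii)}) gives continuity of $\mathfrak{l}$. Identifying $\text{graph}(\mathfrak{l}) = \mathsf{bd}_{[\xi_0,\pi]}$ then completes the proof. Aside from disambiguating condition \emph{(iii)}, the remaining work is essentially bookkeeping driven by the monotonicity supplied by \emph{(ii)}.
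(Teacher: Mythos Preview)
Your approach is correct and genuinely different from the paper's. The paper argues via a level-curve ODE: it introduces the gradient-orthogonal flow $\dot\xi = -\partial_\beta\mathcal{L}$, $\dot\beta = \partial_\xi\mathcal{L}$, launches the unique solution from $(\xi_0,-\beta_\text{max})$, uses \emph{(ii)} to get $\dot\xi > 0$ so the trajectory moves rightward through the rectangle, argues it can only exit through the $\xi=\pi$ edge, and then invokes \emph{(iii)} to certify a single exit root there and hence rule out any other component of $\mathsf{bd}_{[\xi_0,\pi]}$. Your route bypasses the flow entirely: \emph{(ii)} already gives at most one zero per vertical slice, so the graph property is immediate once you have existence, and existence reduces to the two edge sign conditions, which you recover from \textbf{Property 1}, the symmetry $\mathcal{L}(-\xi,-\beta)=\mathcal{L}(\xi,\beta)$, and continuity. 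This is strictly more elementary. In fact your argument shows that hypothesis \emph{(iii)} is not needed at all: from $\mathcal{L}(\xi_0,-\beta_\text{max})=0$ and strict $\beta$-monotonicity \emph{(ii)} you get $\mathcal{L}(\xi_0,\beta_\text{max})<0$ directly, and then the symmetry plus \textbf{Property 1} (which pins the unique top-edge zero at $-\xi_0\notin[\xi_0,\pi]$) propagates the sign across the whole interval without ever touching the $\xi=\pi$ edge analysis. The paper's ODE framing is more geometric and would survive weakening \emph{(ii)} to hold only along the level curve rather than on the full rectangle, but under the stated hypotheses your slice-by-slice IVT/IFT argument is the cleaner one.
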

\begin{proof}
	Consider the two-state ODE defined by:
	\begin{equation}
		\dot{\xi} = -\tfrac{\partial}{\partial \beta} \mathcal{L}_{\bar{r},\sigma,\ell_r}(\xi, \beta, \cdot) ;\quad
		\dot{\beta} = \tfrac{\partial}{\partial \xi} \mathcal{L}_{\bar{r},\sigma,\ell_r}(\xi, \beta, \cdot). \label{eq:constant_curve_ode}
	\end{equation}
	The solutions to \eqref{eq:constant_curve_ode} are guaranteed to exist and 
	be unique on $[\xi_0, \pi] \times [-\beta_\text{max}, \beta_\text{max}]$, 
	since the vector field is locally Lipschitz on that rectangle (it is 
	differentiable). Thus, any solution of \eqref{eq:constant_curve_ode} is 
	guaranteed to follow a constant-level curve of 
	$\mathcal{L}_{\bar{r},\sigma,\ell_r}$ within this rectangle; the particular 
	constant-level curve matches the value of 
	$\mathcal{L}_{\bar{r},\sigma,\ell_r}$ for its initial condition.

	As a first step, we establish two facts about the solution of 
	\eqref{eq:constant_curve_ode} with initial condition $(\xi_0, 
	-\beta_\text{max})$:
	\begin{enumerate}
		\item the $\beta$ component of this solution strictly increases; and 

		\item the solution exits $[\xi_0, \pi] \times [-\beta_\text{max}, 
			\beta_\text{max}]$ only through its $\xi = \pi$ edge.
	\end{enumerate}

	First note that some initial portion of this solution must be contained in 
	$[\xi_0, \pi] \times [-\beta_\text{max}, \beta_\text{max}]$ by assumption 
	\emph{(i)} and assumption \emph{(ii)} applied to $(\xi_0, 
	-\beta_\text{max})$. Statement 1 is thus established directly by assumption 
	\emph{(ii)}. Now we establish 2). Note that the solution cannot exit via 
	the $\xi=\xi_0$ edge because its $\beta$ component is strictly increasing. 
	And it can't exit the $\beta = \mp \beta_\text{max}$ edges either, because 
	we have verified that $\mathcal{L}_{\bar{r},\sigma,\ell_r}$ has only one 
	root on each edge, $(\xi, -\beta_\text{max})$ and $(-\xi_0, 
	\beta_\text{max})$, respectively. This leaves only the $\xi=\pi$ edge. The 
	solution must leave this rectangle eventually, by the exclusion of the 
	other edges and the fact that its $\beta$ component is strictly increasing. 
	Thus, it exits via the $\xi = \pi$ edge.

	The final conclusion about the functionality follows if 
	$\mathsf{bd}_{[\xi_0, \pi]}$ if $\mathsf{bd}_{[\xi_0, \pi]}$ corresponds 
	\emph{exactly} to the single, unique solution described above. To verify 
	this, we need to verify that there is a single root of 
	$\mathcal{L}_{\bar{r},\sigma,\ell_r}$ along the $\xi=\pi$ edge, much as we 
	did to verify \textbf{Property 1}; this is made possible by 
	assumption\emph{(iii)}, items \emph{(I)}-\emph{(II)}.
\end{proof}

As in the case of \cref{prop:single_zero}, the conditions of 
\cref{prop:functional_verification} are conditions that can be checked using 
the \texttt{CertifyMin} function that we will propose subsequently.


\subsection{Verifying Property 3} 
\label{ssub:property_3}

We verify \textbf{Property 3} starting from the assumption that verifications 
of \textbf{Property 1} and \textbf{Property 2} were successful. In particular, 
we assume a function $\mathfrak{l}$ with domain $[\xi_0, \pi]$ that defines the 
lower boundary of the set $R_{h_{\bar{r},\sigma}}$, and which is characterized 
entirely by $\mathcal{L}_{\bar{r},\sigma,\ell_r}(\xi, \mathfrak{l}(\xi), \cdot) = 0$.

Since $\mathfrak{l}$ corresponds exactly to such a constant-level contour, we 
can use derivatives of $\mathcal{L}_{\bar{r},\sigma,\ell_r}$ to compute the derivative of 
$\mathfrak{l}$ with respect to $\xi$.
That is if we define
\begin{equation}
	\gamma^\prime(\xi, \beta)
	\triangleq
	-\frac{
		\partial \mathcal{L}_{\bar{r}, \sigma, \ell_r} / \partial \xi
	}{
		\partial \mathcal{L}_{\bar{r}, \sigma, \ell_r} / \partial \beta
	}(\xi, \beta)
\end{equation}
then $\mathfrak{l}^\prime(\xi) = \gamma^\prime(\xi, \mathfrak{l}(\xi))$.

By extension then, it is possible to derive the \emph{second} derivative of 
$\mathfrak{l}$ using $\mathcal{L}_{\bar{r},\sigma,\ell_r}$ if we define:
\begin{equation}
	\gamma^{\prime\prime}(\xi, \beta) \triangleq
	\tfrac{\partial}{\partial \xi} \gamma^\prime(\xi, \beta)
		+
	\tfrac{\partial}{\partial \beta}\gamma^\prime(\xi, \beta) \cdot \gamma^\prime(\xi, \beta)
\end{equation}
so that $\mathfrak{l}^{\prime\prime}(\xi) = \gamma^{\prime\prime}(\xi, 
\mathfrak{l}(\xi))$. This gives us a sufficient condition for the 
\emph{concavity} of $\mathfrak{l}$.

\begin{proposition}
	Suppose that $\xi_0$ and $\text{graph}(\mathfrak{l}) \triangleq 
	\mathsf{bd}_{[\xi_0,\pi]}$ as above. If for all $(\xi,\beta) \in 
	[\xi_0,\pi] \times [-\beta_\text{max}, \beta_\text{max}]$ we have that
	\begin{equation}\label{eq:property3_min_cond}
		 \gamma^{\prime\prime}(\xi, \beta) < 0
	\end{equation}
	then $\mathfrak{l}$ is concave.
\end{proposition}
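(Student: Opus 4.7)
The plan is to reduce the claim to the standard second-derivative characterization of concavity, by showing that $\mathfrak{l}^{\prime\prime}(\xi) < 0$ on $[\xi_0,\pi]$. The hypothesis gives us strict negativity of $\gamma^{\prime\prime}$ on the whole rectangle, so the task is really just to justify the identity $\mathfrak{l}^{\prime\prime}(\xi) = \gamma^{\prime\prime}(\xi, \mathfrak{l}(\xi))$ claimed immediately preceding the proposition, and then to track that the argument stays inside the rectangle.

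First I would recall from \textbf{Property 2} and in particular from the proof of \cref{prop:functional_verification} that $\mathfrak{l}$ is defined via $\mathcal{L}_{\bar{r},\sigma,\ell_r}(\xi, \mathfrak{l}(\xi), \cdot) = 0$ and that, by construction, $(\xi, \mathfrak{l}(\xi)) \in [\xi_0,\pi] \times [-\beta_\text{max}, \beta_\text{max}]$ for every $\xi \in [\xi_0,\pi]$. I would also invoke hypothesis (ii) of \cref{prop:functional_verification}, which guarantees $\partial\mathcal{L}/\partial\beta < 0$ throughout this rectangle; by the implicit function theorem applied to the defining relation $\mathcal{L}(\xi,\mathfrak{l}(\xi),\cdot) = 0$, this immediately yields that $\mathfrak{l}$ is $C^2$ and that
\begin{equation}
    \mathfrak{l}^\prime(\xi) = -\frac{\partial \mathcal{L}/\partial \xi}{\partial \mathcal{L}/\partial \beta}(\xi,\mathfrak{l}(\xi)) = \gamma^\prime(\xi,\mathfrak{l}(\xi)).
\end{equation}

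Next I would differentiate once more using the chain rule. Since $\mathfrak{l}^\prime(\xi) = \gamma^\prime(\xi, \mathfrak{l}(\xi))$, we obtain
\begin{equation}
    \mathfrak{l}^{\prime\prime}(\xi) = \tfrac{\partial}{\partial \xi}\gamma^\prime(\xi, \mathfrak{l}(\xi)) + \tfrac{\partial}{\partial \beta}\gamma^\prime(\xi, \mathfrak{l}(\xi)) \cdot \mathfrak{l}^\prime(\xi) = \gamma^{\prime\prime}(\xi, \mathfrak{l}(\xi)),
\end{equation}
which matches the expression defined just above the proposition. Because $(\xi,\mathfrak{l}(\xi))$ lies in the rectangle on which the hypothesis \eqref{eq:property3_min_cond} applies, we conclude $\mathfrak{l}^{\prime\prime}(\xi) < 0$ for every $\xi \in [\xi_0,\pi]$. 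The standard fact that a $C^2$ function with strictly negative second derivative on an interval is (strictly) concave then closes the proof.

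The main obstacle here is really bookkeeping rather than substance: one must verify that the implicit function theorem is applicable globally on $[\xi_0,\pi]$ (i.e.\ that $\mathfrak{l}$ is well-defined and smooth rather than merely locally so), and that $(\xi,\mathfrak{l}(\xi))$ never leaves the rectangle where the hypothesis holds. Both of these are already established in the verification of \textbf{Property 2}, so the argument reduces to invoking them and applying the chain rule.
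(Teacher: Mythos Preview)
Your proposal is correct and follows essentially the same approach as the paper: the paper's proof reads simply ``Direct from the calculations above,'' referring to the preceding identities $\mathfrak{l}^\prime(\xi) = \gamma^\prime(\xi,\mathfrak{l}(\xi))$ and $\mathfrak{l}^{\prime\prime}(\xi) = \gamma^{\prime\prime}(\xi,\mathfrak{l}(\xi))$, which you have spelled out in full via the implicit function theorem and chain rule. Your version is more explicit about the regularity of $\mathfrak{l}$ and the fact that $(\xi,\mathfrak{l}(\xi))$ stays in the rectangle, but the underlying argument is identical.
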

\begin{proof}
	Direct from the calculations above.
\end{proof}

\subsection{\texttt{CertifyMin} and the ShieldNN Verifier} 
\label{ssub:certifymin}

Together, \textbf{Properties 1-3} are sufficient to prove that a particular set 
of parameters leads $h_{\bar{r},\sigma}$ to be a barrier function for the KBM. 
Furthermore, each of these conditions involves asserting that 
$\mathcal{L}_{\bar{r},\sigma,\ell_r}$ or its derivatives are strictly positive 
or negative on an interval or a rectangle.

Since $\mathcal{L}_{\bar{r},\sigma,\ell_r}$ is composed of simple functions, it is possible 
for a computer algebra system (CAS) to not only obtain each of these 
verification functions automatically, but to further differentiate each of them 
once \emph{more}. Thus, we can combine an extra derivative with the Mean-Value 
Theorem to verify each of these individual claims. We describe this procedure 
as \texttt{CertifyMin} below.

\begin{algorithm}

\SetKwData{LinFnCnt}{N\_est}
\SetKwData{hyperplanes}{G\_Hinv\_Gtr}
\SetKwData{solnsFound}{AllSolutionsFound}
\SetKwData{numhyperplanes}{NumHyperplanes}
\SetKwData{h}{h}
\SetKwData{b}{b}

\SetKwData{sols}{Solutions}
\SetKwData{cons}{SATConstraints}
\SetKwData{setcons}{setConstraints}
\SetKwData{feas}{Feasible?}
\SetKwData{iis}{IIS}
\SetKwData{z}{z}
\SetKwData{hyperset}{HyperplaneSet}

\SetKwFunction{CntRegions}{EstimateRegionCount}
\SetKwFunction{GetHyperplanes}{GetHyperplanes}
\SetKwFunction{dim}{Dimensions}
\SetKwFunction{sat}{SATsolver}
\SetKwFunction{maxx}{Maximize}
\SetKwFunction{init}{init}
\SetKwFunction{initbools}{createBooleanVariables}
\SetKwFunction{append}{Append}
\SetKwFunction{satq}{SAT?}
\SetKwFunction{checkfeas}{CheckFeas}
\SetKwFunction{truevars}{TrueVars}
\SetKwFunction{selecthypers}{GetHyperplanes}
\SetKwFunction{getiis}{GetIIS}
\SetKwFunction{cntunique}{CountAllUniqueSubsets}

\SetKwFunction{certmin}{CertifyMin}
\SetKwFunction{D}{SymbolicGradient}
\SetKwFunction{norm}{SymbolicNorm}
\SetKwFunction{getcomp}{SymbolicGetComponent}
\SetKwFunction{gditer}{GridIterator}

\SetKwData{df}{Df}
\SetKwData{dfn}{DfNorm}
\SetKwData{scale}{scale}
\SetKwData{gdsz}{gridSize}
\SetKwData{ver}{verified}
\SetKwData{ref}{refine}
\SetKwData{false}{False}
\SetKwData{true}{True}

\SetKw{Break}{break}

\SetKwInOut{Input}{input}
\SetKwInOut{Output}{output}
\Input{function $f(\xi,\beta)$ that is either $\mathcal{L}_{\bar{r},\sigma,\ell_r}$ or one of its derivatives; $\xi$-interval $[\xi_\ell, \xi_h]$; $\beta$-interval $[\beta_\ell, \beta_h]$; a sign $s = \pm1$. \textbf{Either $\xi_\ell=\xi_h$ or $\beta_\ell=\beta_h$ but not both.}}
\Output{\LinFnCnt}
\BlankLine
\SetKwProg{Fn}{function}{}{end}%
\Fn{\certmin{$f$, $\xi_\ell$,$\xi_h$, $\beta_\ell$, $\beta_h$, $s$}}{

	\df $\leftarrow$ \D{f}

	\uIf{$\xi_\ell = \xi_h$}{
		\dfn $\leftarrow$ \norm{\getcomp{\df, $\beta$}}
	}\uElseIf{$\beta_\ell=\beta_h$}{
		\dfn $\leftarrow$ \norm{\getcomp{\df, $\xi$}}
	}\Else{
		\dfn $\leftarrow$ \norm{\df}
	}

	\BlankLine

	\gdsz $\leftarrow$ 1

	\ref $\leftarrow$ \true

	\BlankLine

	\While{\ref}{
		\gdsz $\leftarrow$ \gdsz /10

		\ref $\leftarrow$ \false

		\For{$(\xi^\prime, \beta^\prime)$ {\normalfont\bfseries in} \gditer{\gdsz,$\xi_\ell$,$\xi_h$, $\beta_\ell$, $\beta_h$}}{

			\uIf{$s \cdot f(\xi^\prime, \beta^\prime) < 0$}{
				\Return \false
			}\ElseIf{$\;s \cdot f(\xi^\prime, \beta^\prime) < \sqrt{2}\cdot$ \gdsz $\cdot$ \dfn{$\xi^\prime, \beta^\prime$}$\;$}{
				\ref $\leftarrow$ \true

				\Break
			}
		}

	}

	\BlankLine

	\Return \true
}
\caption{\texttt{CertifyMin}.}
\label{alg:certifymin}
\end{algorithm}

\section{ShieldNN Synthesizer} %
\label{sec:synthesizer}
Given a barrier function, recall from \textbf{(3)} in  \cref{sec:problem} that 
synthesizing a ShieldNN filter entails two components: $\nn_{\negthinspace0}$ 
and $\nn$. That is $\nn_{\negthinspace0}$ chooses a \emph{safe} control for 
each state, and $\nn$ overrides \emph{unsafe} controls with the output of 
$\nn_{\negthinspace0}$.

	\textbf{Design of }$\nn_{\negthinspace0}$\textbf{.} %
	This task is much easier than it otherwise would be, since the ShieldNN 
	verifier also verifies the safe controls as lying between the continuous 
	functions $\max\{-\beta_\text{max}, \mathfrak{l}\}$ and 
	$\min\{\beta_\text{max},\mathfrak{u}\}$ where $\mathfrak{l}$ and is concave 
	and $\mathfrak{u}(\xi) = - \mathfrak{l}(-\xi)$. In particular, then, it is 
	enough to design $\nn_{\negthinspace0}$ as any neural network such that 
	\begin{equation}
		\max\{-\beta_\text{max}, \mathfrak{l}\}
		\leq
		\nn_{\negthinspace0}
		\leq 
		\min\{\beta_\text{max},\mathfrak{u}\}.
	\end{equation}
	This property can be achieved in several ways, including training against 
	samples of $\max\{-\beta_\text{max}, \mathfrak{l}\}$ for example. However, 
	we chose to synthesize $\nn_{\negthinspace0}$ directly in terms of tangent 
	line segments to $\mathfrak{l}$ (and thus exploit the \emph{concavity} of 
	$\mathfrak{l}$). A portion of just such a function $\nn_{\negthinspace0}$ 
	is illustrated by the orange line in \cref{fig:shieldnn_design}. 

	\textbf{Design of }$\nn$\textbf{.} %
	Since the value of $\nn_{\negthinspace0}$ is designed to lie inside the 
	interval of safe controls, the function $\nn_{\negthinspace0}$ can itself 
	be used to decide when an unsafe control is supplied. In particular, using 
	this property and the symmetry $\mathfrak{u}(\xi) = -\mathfrak{l}(-\xi)$, 
	we can simply choose
	\begin{equation}
		\nn : \beta \mapsto \min\{
			\max\{ \nn_{\negthinspace0}(\beta), \beta \},
			-\nn_{\negthinspace0}(-\beta)
			\}.
	\end{equation}
	Note: the closer $\nn_{\negthinspace0}$ approximates its lower bound, 
	$\max\{-\beta_\text{max},\mathfrak{l}\}$, the less intrusive the safety 
	filter. Two constant-$\xi$ slices of such a $\nn$ are shown in 
	\cref{fig:shieldnn_design}.

\section{Extending ShieldNN to Multiple Obstacles}\label{sec:mosa}%
We now consider the general case of multiple obstacles in the environment. We 
propose two approaches: 1) a Single-Obstacle Safe-Action (SOSA). 2) a 
Multi-Obstacle-Safe-Action (MOSA) approach which is described in 
\cref{alg:MOSA}.

\subsubsection{SOSA}
We start by sorting the detected obstacle states during runtime according to 
their distances $r$ from the vehicle. Then, we choose the closest obstacle and 
directly apply the synthesized ShieldNN $\nn$ to the controller steering action 
in order to generate safe steering control actions that are safe considering 
only the current closest obstacle.

\subsubsection{MOSA}
The goal is to search inside the state-action space for an action that is safe 
for all detected obstacles in the environment. Let $\Xi = \{\xi_1, \xi_2, ... 
\xi_N\}$ is the list of angles $\xi$ for all detected obstacles sorted by the 
closest to the furthest obstacle to the vehicle. First, we get the lower and 
upper bounds of the safe action interval for each $\xi_i$ as follows: i) if 
$\xi_i > 0$, then the corresponding safe action interval is 
$[-\beta_\text{max}, \nn_0(\xi)]$ otherwise, the safe action interval is 
$[-\beta_\text{max}, -\nn_0(-\xi)]$. This is due to the symmetry of the 
boundary between safe and unsafe regions as shown in 
\cref{fig:practical_safe_controls}. Second, we find the common safe action 
interval between all detected obstacles by getting the intersection of these 
intervals. If the intersection is an empty interval, this means that we cannot 
find an action that is guaranteed to be safe for all detected obstacles. In 
that case, we fall back to SOSA, generate a safe action considering the closest 
obstacle $\nn(\mathbf{\Xi}[0])$ and raise a flag indicating that no safe action 
can be found. If the unsafe controller is a learning based controller (e.g. a 
neural network), the unsafe action flag is used to train the unsafe controller 
on avoiding moving the vehicle to a state where no common safe action for all 
detected obstacles can be found. When the intersection interval is nonempty, we 
use a safe control action $\beta^*$ inside the intersection that is closest to 
the input unsafe action $\beta$.     
\begin{algorithm}

\SetKwData{LinFnCnt}{N\_est}
\SetKwData{hyperplanes}{G\_Hinv\_Gtr}
\SetKwData{solnsFound}{AllSolutionsFound}
\SetKwData{numhyperplanes}{NumHyperplanes}
\SetKwData{h}{h}
\SetKwData{b}{b}

\SetKwData{sols}{Solutions}
\SetKwData{cons}{SATConstraints}
\SetKwData{setcons}{setConstraints}
\SetKwData{feas}{Feasible?}
\SetKwData{iis}{IIS}
\SetKwData{z}{z}
\SetKwData{hyperset}{HyperplaneSet}

\SetKwFunction{CntRegions}{EstimateRegionCount}
\SetKwFunction{GetHyperplanes}{GetHyperplanes}
\SetKwFunction{dim}{Dimensions}
\SetKwFunction{sat}{SATsolver}
\SetKwFunction{maxx}{Maximize}
\SetKwFunction{init}{init}
\SetKwFunction{initbools}{createBooleanVariables}
\SetKwFunction{append}{Append}
\SetKwFunction{satq}{SAT?}
\SetKwFunction{checkfeas}{CheckFeas}
\SetKwFunction{truevars}{TrueVars}
\SetKwFunction{selecthypers}{GetHyperplanes}
\SetKwFunction{getiis}{GetIIS}
\SetKwFunction{cntunique}{CountAllUniqueSubsets}

\SetKwFunction{mosa}{MOSA}
\SetKwFunction{D}{SymbolicGradient}
\SetKwFunction{norm}{SymbolicNorm}
\SetKwFunction{getcomp}{SymbolicGetComponent}
\SetKwFunction{gditer}{GridIterator}
\SetKwFunction{insert}{insert}

\SetKwData{df}{Df}
\SetKwData{dfn}{DfNorm}
\SetKwData{scale}{scale}
\SetKwData{gdsz}{gridSize}
\SetKwData{ver}{verified}
\SetKwData{ref}{refine}
\SetKwData{false}{False}
\SetKwData{true}{True}
\SetKwData{silb}{SafeIntervalsLBs}
\SetKwData{siub}{SafeIntervalsUBs}
\SetKwData{na}{new Array}
\SetKwData{sa}{SafeAction}
\SetKwData{overlaplb}{IntersectionLB}
\SetKwData{overlapub}{IntersectionUB}
\SetKw{Break}{break}

\SetKwInOut{Input}{input}
\SetKwInOut{Output}{output}
\Input{Array $\mathbf{\Xi}$ contains the list of $\xi$ angles for all detected obstacles sorted by the closest to furthest from the vehicle, $\beta$, $\beta_\text{max}$, $\nn_0, \nn$}
\Output{$\beta^*$, \sa}
\BlankLine
\SetKwProg{Fn}{function}{}{end}%
\Fn{\mosa{$\mathbf{\Xi}$, $\beta$, $\beta_\text{max}$, $\nn_0$, $\nn$}}{

    \silb, \siub $\leftarrow$ \na

    \ForEach {$\xi \in \mathbf{\Xi} $}
    {
        \uIf{$\xi$ > 0}
        {
            \silb.\insert{$\nn_0(\xi)$}
            \BlankLine
            \siub.\insert{$\beta_\text{max}$}
        }
        \Else{
            \silb.\insert{$-\beta_\text{max}$}
            \BlankLine
            \siub.\insert{$\nn_0(\xi)$}
        }
        \overlaplb $\leftarrow \max(\silb)$
        \BlankLine
        \overlapub $\leftarrow \min(\siub)$
    }
    \uIf {$\overlapub \geq \overlaplb$}
    {
        $\sa \leftarrow true$
        
        \uIf {$\beta \in [\overlaplb,  \overlapub]$}
        {
            $\beta^* \leftarrow \beta$
        }
        \uElseIf{$\beta > \overlapub$}
        {
            $\beta^* \leftarrow \overlapub$
        }
        \Else{
            $\beta^* \leftarrow \overlaplb$
        }
    }
    \Else {
        $\beta^* \leftarrow \nn(\mathbf{\Xi}[0], \beta)$, $\sa \leftarrow false$
    }
    \Return $\beta^*$, \sa
}
\caption{\texttt{MOSA}.}
\label{alg:MOSA}
\end{algorithm}

\section{ShieldNN Evaluation}
\label{sec:experiments}

We conduct a series of experiments to evaluate ShieldNN's performance when 
applied to unsafe RL controllers. The CARLA Simulator~\cite{Dosovitskiy17} is 
used as our RL environment, and we consider an RL agent whose goal is to 
drive a simulated vehicle while avoiding the obstacles in the environment. 
The goals of the experiments are to assess the following:
\begin{enumerate}
    \item The safety of the RL agent when ShieldNN is applied after training 
	(Experiment 1).
    \item The robustness of ShieldNN when applied in a different environment 
	than that used in training (Experiment 2).
    \item The effect of applying SOSA and MOSA aproaches on the RL agent in 
	case of having an environment with multiple obstacles in terms of safety 
	(Experiment 3).
\end{enumerate}

		

\textbf{RL Task:} The RL task is to drive a simulated four-wheeled vehicle from 
point A to point B on a curved road that is populated with obstacles. The 
obstacles are static CARLA pedestrian actors randomly spawned at different 
locations between the two points. 
We define  unsafe states as those in which the vehicle hits an obstacle. As 
ShieldNN is designed for obstacle avoidance, we do not consider the states when 
the vehicle hits the sides of the roads to be unsafe with respect to ShieldNN. 
Technical details and graphical representations are included in the 
Supplementary Materials.

\textbf{Reward function and termination criteria:} If the vehicle reaches point 
B, the episode terminates, and the RL agent gets a reward value of a $100$. The 
episode terminates, and the agent gets penalized by a value of a $100$ in the 
following cases: when the vehicle (i) hits an obstacle; (ii) hits one of the 
sides of the road; (iii) has a speed lower than 1 KPH after 5 seconds from the 
beginning of the episode; or (iv) has a speed that exceed the maximum speed (45 
KPH). The reward function is a weighted sum of four terms, and the weights were 
tuned during training. The four terms are designed in order to incentivize the 
agent to keep the vehicle's speed between a minimum speed (35 KPH) and a target 
speed (40 KPH), maintain the desired trajectory, align the vehicle's heading 
with the direction of travel, and keep the vehicle away from obstacles.

\textbf{Integrating ShieldNN with PPO:} We train a Proximal Policy Optimization 
(PPO) \cite{schulman2017proximal} neural network in order to perform the 
desired RL task. To speed up policy learning as in \cite{raffin2019decoupling}, 
we encode the front camera feed into a latent vector using the encoder part of 
a trained $\beta$-Variational Auto-Encoder ($\beta$-VAE). The encoder takes 
160x80 RGB images generated by the simulated vehicle's front facing camera and 
outputs a latent vector that encodes the state of the surroundings. The inputs 
to the PPO Network are: The latent vector $[z_1, ..., z_{dim}]$, the vehicle's 
inertial measurements (current steering angle $\delta_f^c$, speed $v$ and 
acceleration $a$) and the relative angle $\xi$ and distance $r$ between the 
vehicle and the nearest obstacle. The latter two measurements are estimated 
using an obstacle detection module that takes the vehicle's LIDAR data as 
input. In our experiments, we assume we have a perfect obstacle detection 
estimator and we implement it by collecting the ground truth position and 
orientation measurements of the vehicle and the obstacles from CARLA then 
calculating $\xi$ and $r$. The PPO network outputs the new control actions: 
Throttle $\zeta$ and steering angle $\delta_f$. We omit using the brakes as 
part of the control input vector, as it is not necessary for this task. 
However, the RL agent will still be able to slow down the vehicle by setting 
the throttle value to $0$ due to the simulated wheel friction force in CARLA. 
The throttle control action $\zeta$ gets passed directly to CARLA, while the 
steering angle control action gets filtered by ShieldNN. The filter also takes 
$\xi$ and $r$ as input and generates a new safe steering angle 
$\hat{\delta_f}$. To train the VAE, we first collect 10,000 images by driving 
the vehicle manually in CARLA along the desired route with obstacles spawned at 
random locations and observing different scenes from different orientations. We 
train the VAE encoder with cross validation and early-stopping. Then, after 
convergence, we check the output image to validate the VAE encoder.

\textbf{ShieldNN Parameters:} The ShieldNN filter is synthesized as in  
\cref{sec:problem} with $\bar{r} = 4 \text{ m}, \sigma = 0.48$ and KBM 
parameters ${\delta_f}_\text{max} = \pi/4,l_f = l_r = 2m$, and $v_\text{max} = 
20m/s$. 
\subsection{Experiment 1: Safety Evaluation of ShieldNN} 

\begin{table}[!t]
\vspace{1.85mm} %
\centering
\scalebox{0.8}{
\begin{tabular}{|c|c|c|c|c|c|c|c|}
\cline{2-8}
    \multicolumn{1}{l}{} & \multicolumn{2}{|c|}{{\bf Training}} & {\bf Testing} & \multicolumn{2}{|c|}{{\bf Experiment 2}} & \multicolumn{2}{c|}{{\bf Experiment 3A}} \\\hline
    \textbf{Config} & \textbf{Obstacle}   & \textbf{Filter}   & \textbf{Filter}  & \textbf{TC\%$^1$} & \textbf{OHR\%$^2$} & {\bf TC\%$^1$} & {\bf OHR\%$^2$} \\\hline 
    1 & OFF & OFF & OFF & 7.59 & 99.5 & 27.53 & 79.5 \\\hline
    2 & OFF & OFF & ON & 98.82 & 0.5 & 98.73 & 0.5 \\\hline
    3 & ON & OFF & OFF & 94.82 & 8.5 & 71.88 & 34 \\\hline
    4 & ON & OFF & ON & 100 & 0 & 100 & 0 \\\hline
    5 & ON & ON & OFF & 62.43 & 44 & 50.03 & 60 \\\hline
    6 & ON & ON & ON & 100 &  0 & 100 & 0 \\\hline
\end{tabular}
}
\\
 $^1$ TC\% := Track Completion \% \quad $^2$ OHR\% := Obstacle Hit Rate \%


\caption{Experiment 1 \& 2,  
evaluation of safety and performance with and without ShieldNN.}
\label{tab:exp2}
\vspace{-5pt}
\end{table}

The goal of this experiment is to validate the safety guarantees provided by 
ShieldNN when applied to non-safe controllers. To do this, we evaluate the 
three trained agents 
in the same environment they were trained in, and with obstacles 
spawned randomly according to the same distribution used during training. With 
this setup, we consider two evaluation scenarios: (i) when the ShieldNN filter 
is in place 
(ShieldNN ON) and (ii) when ShieldNN filter is not in place 
(ShieldNN OFF). \cref{tab:exp2} shows all six configurations of this 
experiment. For each configuration, we run 200 episodes and record three 
metrics: (i) the minimum distance between the center of the vehicle and the 
obstacles, (ii) the average percentage of track completion, and (iii) the 
percentage of hitting obstacles across the 200 episodes. 

\cref{fig:2-off} and \ref{fig:2-on} show the histograms of the minimum distance 
to obstacles for each configuration. The figure also show two vertical lines at 
2.3 m and 4 m: the former is the minimum distance at which a collision can 
occur, given the length of the vehicle, and the latter is
the value of the safe distance $\bar{r}$ used to design the ShieldNN filter. 
Whenever the ShieldNN was not used in the 200 testing episodes (ShieldNN OFF, 
\cref{fig:2-off}), the average of all the histograms is close to the 2.3 m line 
indicating numerous obstacle collisions. The exact percentage of obstacle hit 
rate is reported in Table \cref{tab:exp2}. Upon comparing the histograms in 
\cref{fig:2-off} with those in \ref{fig:2-on}, we conclude that ShieldNN 
nevertheless renders all the three agents safe: note that the center of mass of 
the histograms shifts above the safety radius parameter, $\bar{r}$, used to 
design the ShieldNN filter. In particular, Agents 2 and 3 were able to avoid 
all the obstacles spawned in all 200 episodes, while Agent 1 hit only 0.5\% of 
the obstacles spawned. Again, we believe this is due to the difference between 
the KBM used to design the filter and the actual dynamics of the vehicle. In 
general, the obstacle hitting rate is reduced by $99.4\%, 100\%$ and $100\%$ 
for Agents 1, 2, and 3, respectively.

\begin{figure}
	\vspace{1.85mm}
	\centering
	\begin{subfigure}{0.8\linewidth}
		\includegraphics[width=\linewidth,trim={60px 0cm 60px 30px},clip]{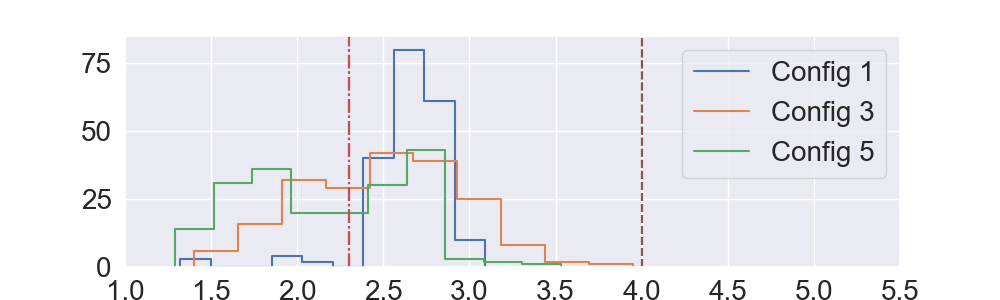}
	\caption{Experiment {\bf 1}, ShieldNN OFF}
	\label{fig:2-off}
	\end{subfigure}
	\\
	\begin{subfigure}{0.8\linewidth}
		\includegraphics[width=\linewidth,trim={60px 0cm 60px 30px},clip]{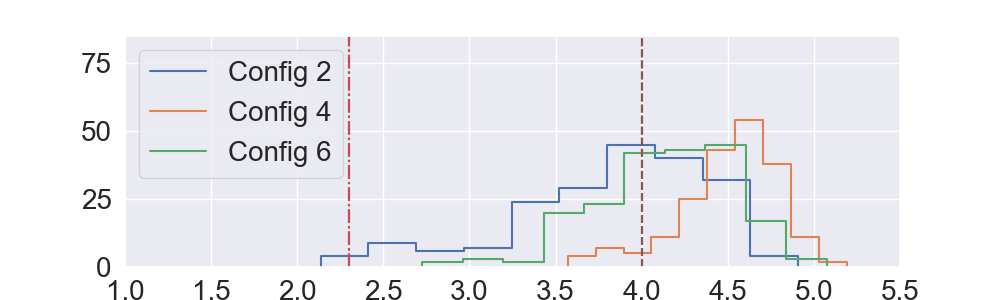}
		\caption{Experiment {\bf 1}, ShieldNN ON}
		\label{fig:2-on}
	\end{subfigure}
	\\
	\begin{subfigure}{0.8\linewidth}
		\includegraphics[width=\linewidth,trim={60px 0cm 60px 30px},clip]{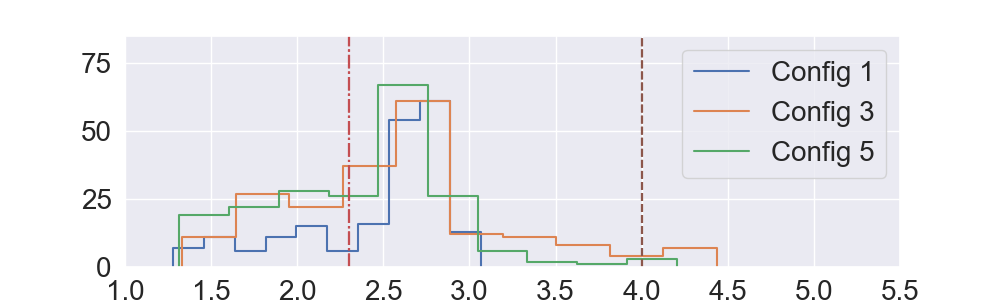}
	\caption{Experiment {\bf 2A}, ShieldNN OFF}
	\label{fig:3a-off}
	\end{subfigure}
	\\
	\begin{subfigure}{0.8\linewidth}
		\includegraphics[width=\linewidth,trim={60px 0cm 60px 30px},clip]{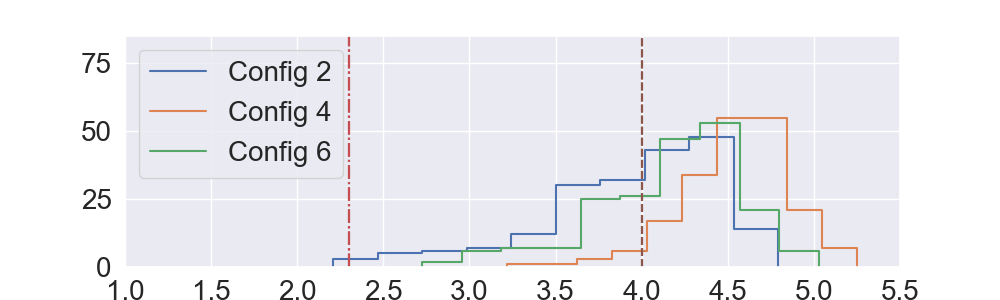}
		\caption{Experiment {\bf 2A}, ShieldNN ON}
		\label{fig:3a-on}
	\end{subfigure}
    %
	\caption{Distributions of distance-to-obstacles for experiments 2 \& 3, with and without ShieldNN.}
	\label{fig:exp3b}
\end{figure}

\subsection{Experiment 2: Robustness of ShieldNN in Different Environments}
The goal of this experiment is to test the robustness of ShieldNN inside a 
different environment than the training environment. We split the experiment 
into two parts:

\underline{\emph{Part 2-A:}} We use the same setup and metrics as in Experiment 
1 , but we perturb the locations of the spawned obstacles by a Gaussian 
distribution $\mathcal{N}(0,\,1.5) \text{m}$ in the lateral and longitudinal 
directions. \cref{fig:3a-off} and \ref{fig:3a-on} show that despite this 
obstacle perturbation, ShieldNN is still able to maintain a safe distance 
between the vehicle and the obstacles whereas this is not the case when 
ShieldNN is OFF. \cref{tab:exp2} shows an overall increase of obstacle hit rate 
and a decrease in track completion rate when ShieldNN is OFF compared to the 
previous experiment. This is expected, as the PPO algorithm is trained with the 
obstacles spawned at locations with a different distribution than the one used 
in testing. However, ShieldNN continues to demonstrate its performance and 
safety guarantees by having almost $100\%$ track completion rate and almost 
$0\%$ obstacle hit rate.

\underline{\emph{Part 2-B:}} 
This experiment is an evaluation of the ability (or not) of RL 
agents equipped with ShieldNN to generalize to novel environments. To evaluate 
performance in this setting, a transfer learning task is implemented where the 
pretrained Agents 2 and 3 are then {\em retrained} for 500 episodes in the new 
environment (compare to 6000 training episodes for the original experiments). 
The new environment is a city road surrounded by buildings, as opposed to the  
highway environment used for original training. This change substantially 
shifts the distribution of the camera input.

\cref{fig:exp3b2} shows the results for Configurations 4 and 6; recall from 
\cref{tab:exp2} in the main text that these configurations represent when 
(re)training is conducted with ShieldNN OFF and ON, respectively. Note that 
configuration 2 -- where the original training was done in an environment with 
no obstacles -- could not successfully complete the track during re-training 
for 500 episodes nor in testing, and is thus not included in the figure. 
Observe that in both configurations, the agent is still able to avoid obstacles 
for the 200 number of test episodes. Furthermore, configuration 6 (both 
retraining/testing with ShieldNN ON), the agent appears to behave more 
conservatively with respect to obstacle avoidance. 
ShieldNN still achieves the desired safety distance on average, and has 
exactly zero obstacle hits in both cases; it also has track completions of 
$98\%$ and $97\%$ respectively.  


\begin{figure}[ht]
    \centering
		\includegraphics[width=0.4\textwidth,trim={60px 0cm 60px 30px},clip]{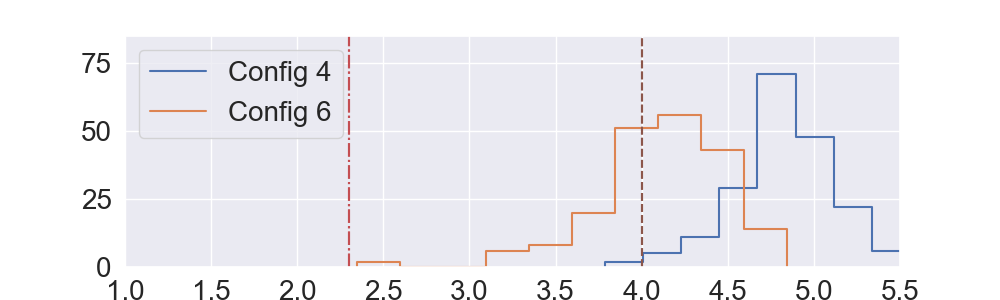}
		\label{fig:hist_3_3}
	\caption{Results of Experiment 3-B, distributions of metrics in a novel environment relative to training.}
	\label{fig:exp3b2}
\end{figure}

\subsection{Experiment 3: Multiple Obstacles}
In order to study the effectiveness of ShieldNN in providing safety in an 
environment where the vehicle can encounter multiple obstacles at the same 
time, we run 100 test scenarios in which we spawn multiple obstacles at random 
positions that are close to each other along the path of the vehicle. We 
compare between four variants based on a pretrained RL agent from Experiment 1 
(Agent 2): 1) No filter, 2) SOSA filter; filtering PPO steering controls based 
on the closest obstacle to the vehicle. 3) MOSA; filtering PPO steering 
controls based on all detected obstacles. 4) MOSA plus penalizing the RL reward 
function with the unsafe action flag. This should disincentivize the RL agent 
from generating control actions that make the vehicle reach a state where MOSA 
cannot output a steering control action that is safe for all obstacles. We 
retrain the RL agent for $2000$ episodes on scenarios that are different from 
the test scenarios. Then, we run the test scenarios with MOSA ON after 
retraining. As shown in \cref{table:exp4}, using the naive approach (SOSA) 
reduces the obstacle hit rate significantly by $75\%$ compared to the case when 
no filter is applied. Applying the proposed MOSA algorithm further reduces the 
obstacle hit rate by $36\%$. The trained RL agent with MOSA applied had the 
lowest obstacle hit rate of $4\%$. This shows that the RL agent is able to 
learn to avoid driving the vehicle into states where no safe action exists for 
all obstacles. Training the RL agent for more episodes does not improve 
obstacle hit rate results further. We notice that track completion percentage 
is relatively low for all test runs which is expected due to the fact that 
trying to avoid hitting multiple obstacles spawned randomly in the environment 
will cause the vehicle to move out of bounds of the track since ShieldNN only 
filters the steering control.

\vspace{-5pt}

\begin{table}[]
\vspace{1.85mm} %
\centering
\begin{tabular}{l|l|l|}

\cline{2-3}
                                                        & \textbf{OHR\%} & \textbf{TC\%} \\ \hline
\multicolumn{1}{|l|}{\textbf{No Filter}}                & 90            & 66            \\ \hline
\multicolumn{1}{|l|}{\textbf{SOSA}}                     & 22             & 73            \\ \hline
\multicolumn{1}{|l|}{\textbf{MOSA} (Before Retraining)} & 14             & 65            \\ \hline
\multicolumn{1}{|l|}{\textbf{MOSA} (After Retraining)}  & 4              & 64            \\ \hline
\end{tabular}
\caption{Experiment 3: Multiple Obstacles Scenarios. SOSA, single-obstacle safe 
action; MOSA, multiple-obstacle safe action; OHR, obstacle hit rate; TC, track 
completion}
\label{table:exp4}
\end{table}


 %


\bibliographystyle{IEEEtran}
\bibliography{mybib} %




%

\clearpage


\appendices  %


\section{Discussion}

\textbf{Question 1: Why is ShieldNN is different from other CBFs?}
The main distinction between ShieldNN and other CBF approaches can be summarized succinctly: ShieldNN does not design a controller at all, much less a \emph{single} safe controller. Rather, ShieldNN designs one NN component that can be used to enforce safety \emph{on any} controller, no matter how that controller was designed. This architecture gives ShieldNN two unique capabilities: (i) ShieldNN can be immediately applied to a black-box controller; and (ii) ShieldNN can be employed \emph{during} RL training of a controller, a situation where the (learned) controller is constantly changing. In particular, optimization-based approaches --- that encode the CBF as a constraint in the numerical optimization problem --- cannot directly be used for the setup in the experimental section where the controller processes data collected from cameras and LiDARs to train a neural network.

\noindent \textbf{Question 2: Since ShieldNN is based on the simple KBM model, will ShieldNN still work for real-world scenarios?}
We address this concern in two ways.

1) Several empirical studies have evaluated the effectiveness of designing controllers for actual four-wheeled vehicles, using only the KBM as a model for their dynamics~\cite{KongKinematicDynamicVehicle2015}. By means of experimental data collected from vehicles on a real-world test track, this reference showed evidence that the KBM is actually a viable approximation to real-vehicle dynamics with regard to controller design.

2) We validated our approach using CARLA, a simulator that simulates a much higher-fidelity dynamical model than the KBM. Even with direct parameter matching between the KBM and the CARLA vehicle, our simulation results show that ShieldNN was almost 100\% effective at avoiding collisions.

Our results do not provide the same evidence of correctness for more complicated, non-KBM dynamical models. However, we view this as a problem for future work: having established a functioning ShieldNN filter for the KBM, we can in the future focus on bounding the errors between the KBM and more complicated dynamical models, thereby obtaining more general evidence of correctness (and with some hope of success, based on points 1 and 2 above).

\noindent \textbf{Question 3: Are there any side effects of ShieldNN?} 
In our experiments, applying ShieldNN during training had the side effect of 
creating a higher curb hitting rate during both training and testing, as 
compared to the case when the agent was trained with ShieldNN OFF. In 
particular, after training for 6000 episodes, the curb hitting rate for agent 2 
went from $100\%$ down to $8\%$. However for agent 3 it went from $100\%$ down 
to $30\%$. This is due to the fact that ShieldNN forces the vehicle to steer 
away from facing an obstacle which, in turn, increases the probability of 
hitting one of the sides of the road. This side effect suggests the possibility 
for future research in generalizing ShieldNN to provide safety guarantees 
against hitting environment boundaries as well. 

\noindent \textbf{Question 4: Are there other current limitations of ShieldNN?}
In addition to the use of the KBM model, another potential limitation of ShieldNN is that it currently applies to single obstacles in a one-at-a-time fashion. We regard the extension to multiple obstacles to be a problem for future work and argue this is a problem  for which the simple safe-control sets identified in ShieldNN will nevertheless be useful. We aim to combine multiple ShieldNNs together in a compositional way to reason about multiple obstacles simultaneously. 




\section{Proofs for Section 4} 
\label{sec:proofs_for_section_4}

There are two claims from \cref{sec:zbf_for_kbm_dynamics} that require proof.

\begin{enumerate}
	\item First, we stated \cref{thm:safe_control_region_grows} without 
		proof.

	\item Second, we claimed that for any KBM parameters $\ell_r = \ell_f$ 
		and $\delta_{f_\text{max}}$, there exists a safety radius, $\bar{r}$, 
		and a barrier parameter, $\sigma$, such that $h_{\bar{r},\sigma}$ and 
		$\alpha_{v_\text{max}}$ (as defined in \eqref{eq:bicycle_barrier} and 
		\eqref{eq:extended_class_k}) comprise a barrier function for the KBM. 
\end{enumerate}
We provide proofs for each of these in the next two subsections after introducing some additional needed notation.

\subsection{Additional Notation} 
\label{sec:additional_notation}

Throughout the rest of this appendix we will use the following notation:
\begin{multline}\label{eq:zbf_expanded}
	\mathscr{L}_{\bar{r},\sigma,\ell_r}(\chi,\beta) \triangleq 
	\nabla_\chi^\text{T} h_{\bar{r},\sigma}(\chi) \cdot f_\text{KBM}(\chi, (\beta, a)) \\
		= v \Bigg(
			\frac{\sigma}{2 \cdot \bar{r} \cdot r} \sin(\xi/2) \sin(\xi - \beta)
			+ \\
			\frac{\sigma}{2 \cdot \bar{r} \cdot \ell_r} \sin(\xi/2) \sin(\beta)
			+
			\frac{\cos(\xi - \beta)}{r^2}
		\Bigg).
\end{multline}
Where $f_\text{KBM}$ is the right-hand side of the ODE in 
\eqref{eq:kbm_dynamics} and the variable $a$ is merely a placeholder, since the 
\eqref{eq:zbf_expanded} doesn't depend on it at all. In particular, 
\eqref{eq:zbf_expanded} has the following relationship with \eqref{eq:zbf_full}:
\begin{equation}
	\mathcal{L}_{\bar{r},\sigma, \ell_r}( \xi, \beta, v)
	=
	\mathscr{L}_{\bar{r},\sigma,\ell_r}((r_\text{min}(\xi),\xi,v), \beta).
\end{equation}

Moreover, we define the following set:
\begin{equation}
	\tilde{\mathcal{C}}_{h_{\bar{r},\sigma}} \triangleq
	\big\{
		\chi^\prime = (r^\prime,\xi^\prime,v^\prime) 
		\; \big\lvert \;
		h(\chi^\prime) \geq 0 \;\wedge\; 0 < v^\prime \leq v_\text{max} 
	\big\},
\end{equation}
which is the subset of the zero-level set of $h_{\bar{r},\sigma}$ that is 
compatible with our assumption that $0 < v \leq v_\text{max}$ (see 
\cref{prob:main_problem}).

\subsection{Proof of \cref{thm:safe_control_region_grows}} 
\label{sub:proof_of_thm:safe_control_region_grows}

We prove the first claim of \cref{thm:safe_control_region_grows} as the 
following Lemma.

\begin{lemma}
\label{lem:safe_region_grows}
Consider any fixed parameters $\bar{r}$, $\ell_r$, $\sigma$ and $v_\text{max} > 
0$. Furthermore, define
\begin{equation}
	K_{\bar{r},\sigma} \triangleq \max(\{1,1/\bar{r}\}) \cdot \left( \tfrac{\sigma}{2 \cdot \bar{r} } 
		+ 2 \right).
\end{equation}
Now suppose that $h_{\bar{r},\sigma}$ is as in \eqref{eq:bicycle_barrier}, and  
$\alpha_{v_\text{max}}$ is as in \eqref{eq:extended_class_k} with $K$ is chosen 
such that $K \geq K_{\bar{r},\sigma}$.

Then for each $(\xi, v, \beta) \in [-\pi,\pi] \times (0, v_\text{max}] \times 
[-\beta_\text{max},\beta_\text{max}]$, the function
\begin{multline}
\label{eq:intermediary_lie_plus_alpha}
	L_{\xi, v, \beta} : r \in [\bar{r}, \infty) \mapsto
		\mathscr{L}_{\bar{r},\sigma,\ell_r}((r,\xi,v),\beta)
		+ \\
		\alpha_{v_\text{max}}( h_{\bar{r},\sigma}((r,\xi,v)) )
\end{multline}
is increasing on its domain, $\text{\normalfont{dom}}(L_{\xi,v,\beta}) = 
[\bar{r},+\infty)$.
\end{lemma}

\begin{remark}
	Note the relationship between the function $L_{\xi,v,\beta}$ in 
	\eqref{eq:intermediary_lie_plus_alpha} and the function used to to define 
	$R_{h,\alpha}$ in \cref{cor:feedback_control_action_set}. That is the set 
	that we are interested in characterizing in 
	\cref{thm:safe_control_region_grows}.
\end{remark}

\begin{proof}
	We will show that when $K \geq K_{\bar{r},\sigma}$, each such function 
	$L_{\xi,v,\beta}$ has a strictly positive derivative on its domain. In 
	particular, differentiating $L_{\xi,v,\beta}$ gives:
	\begin{align}
		\tfrac{\partial}{\partial r} L_{\xi,v,\beta}(r) \negthickspace
		&= \negthickspace
		\tfrac{\partial}{\partial r}\Big[
		\mathscr{L}_{\bar{r},\sigma,\ell_r}\negthinspace((r,\xi,v),\beta) 
			+
			\alpha_{v_\text{max}}\negthinspace( h_{\bar{r},\sigma}((r,\xi,v)) )
		\Big] \notag \\
		&= v \left(
			-\tfrac{\sigma}{2 \cdot \bar{r} \cdot r^2} \sin(\xi/2) \sin(\xi - \beta)
			-2 \tfrac{\cos(\xi - \beta)}{r^3} \right)
			\notag \\ 
			&\qquad\qquad\qquad\qquad\qquad\qquad\qquad\qquad + \tfrac{K \cdot v_\text{max}}{r^2} \notag \\
		&\geq 
		v \left(
			-\tfrac{\sigma}{2 \cdot \bar{r} \cdot r^2} 
			- \tfrac{2}{r^3} \right)
			+ \tfrac{K \cdot v_\text{max}}{r^2}.
	\end{align}
	To ensure that this derivative is strictly positive, it suffices to choose 
	$K$ such that
	\begin{equation}\label{eq:relaxed_positivity_condition}
	 	v \left(
			-\frac{\sigma}{2 \cdot \bar{r} \cdot r^2} 
			-\frac{2}{r^3} \right)
			+ \frac{K \cdot v_\text{max}}{r^2} \geq 0.
	\end{equation}
	For this, we consider two cases: $\bar{r} < 1$ and $\bar{r} \geq 1$.

	When $\bar{r} \geq 1$, then $1/r^3 \leq 1/r^2$ for all $r \geq \bar{r}$. 
	Thus it suffices to choose $K$ such that
	\begin{equation}
			K \geq \frac{v}{v_\text{max}} \left(
			\frac{\sigma}{2 \cdot \bar{r} } +
			2 \right),
	\end{equation}
	which is assured under the assumption that $v \in (, v_\text{max}]$ if 
	\begin{equation}\label{eq:monotone_condition_1}
			K \geq 
			\frac{\sigma}{2 \cdot \bar{r} } +
			2 .
	\end{equation}

	Now, when $\bar{r} < 1$, choosing $K$ according to  
	\eqref{eq:monotone_condition_1} ensures that 
	\eqref{eq:relaxed_positivity_condition} is true for all $r \geq 1$. Thus, 
	we also have to ensure \eqref{eq:relaxed_positivity_condition} holds for 
	$\bar{r} \leq r < 1$. But in this case, $1/r^3 \geq 1/r^2$, so 
	\eqref{eq:relaxed_positivity_condition} will be satisfied if
	\begin{equation}
		K \geq \frac{1}{\bar{r}} \left(
			\frac{\sigma}{2 \cdot \bar{r} } + 2 
		\right).
	\end{equation}
	Thus, the desired conclusion holds if we choose $K \geq K_{\bar{r},\sigma}$ 
	as defined in the statement of the lemma.
\end{proof}

Now, we have the prerequisites to prove \cref{thm:safe_control_region_grows}.

\begin{proof}
	(\cref{thm:safe_control_region_grows}) The first claim of 
	\cref{thm:safe_control_region_grows} is proved as 
	\cref{lem:safe_region_grows}. Thus, it remains to show that for any $\chi = 
	(r, \xi,v) \in \mathcal{C}_{h_{\bar{r},\sigma}}$ with $v \in (0, 
	v_\text{max}]$ --- that is $\chi \in 
	\tilde{\mathcal{C}}_{h_{\bar{r},\sigma}}$ --- we have that 
	\eqref{eq:simple_safe_controls} holds. However, this follows from 
	\cref{lem:safe_region_grows}.

	In particular, choose an arbitrary $\chi^\prime = 
	(r^\prime,\xi^\prime,v^\prime) \in 
	\tilde{\mathcal{C}}_{h_{\bar{r},\sigma}}$, and choose an arbitrary 
	$\omega^\prime = (\beta^\prime, a^\prime) \in 
	R_{h_{\bar{r},\sigma}}((r_\text{min}(\xi^\prime),\xi^\prime,v^\prime))$; as 
	usual we will only need to concern ourselves with the steering control, 
	$\beta^\prime$. First, observe that by definition:
	\begin{multline}
		(\beta^\prime, a^\prime) \in 
		R_{h_{\bar{r},\sigma}}((r_\text{min}(\xi^\prime),\xi^\prime,v^\prime)) \\
		\implies
		\mathscr{L}_{\bar{r},\sigma,\ell_r}((r_\text{min}(\xi^\prime),\xi^\prime,v^\prime),\beta^\prime)
			+ \\
			\alpha_{v_\text{max}}( h_{\bar{r},\sigma}((r_\text{min}(\xi^\prime),\xi^\prime,v^\prime)) )
		\geq 0.
	\end{multline}
	However, the conclusion of this implication can be rewritten using the 
	definition \eqref{eq:intermediary_lie_plus_alpha}: 
	\begin{equation}\label{eq:lem1_notation}
		(\beta^\prime, a^\prime) \in 
		R_{h_{\bar{r},\sigma}}((r_\text{min}(\xi^\prime),\xi^\prime,v^\prime))
		\implies
		L_{\xi^\prime,v^\prime,\beta^\prime}(r_\text{min}(\xi^\prime)) \geq 0.
	\end{equation}
	We now invoke \cref{lem:safe_region_grows}: since $r_\text{min}(\xi^\prime) 
	\geq \bar{r}$ by construction, \cref{lem:safe_region_grows} indicates that 
	$L_{\xi^\prime,v^\prime,\beta^\prime}$ is strictly increasing on the 
	interval $[r_\text{min}(\xi^\prime), r^\prime]$. Combining this conclusion 
	with \eqref{eq:lem1_notation}, we see that 
	$L_{\xi^\prime,v^\prime,\beta^\prime}(r^\prime) \geq 0$. Again using the 
	definition of $L_{\xi^\prime, v^\prime, \beta^\prime}$ in 
	\eqref{eq:lem1_notation}, we conclude that
	\begin{equation}
		\mathscr{L}_{\bar{r},\sigma,\ell_r}((r^\prime,\xi^\prime,v^\prime),\beta^\prime)
			+ \alpha_{v_\text{max}}( h_{\bar{r},\sigma}(r^\prime,\xi^\prime,v^\prime)) )
		\geq 0.
	\end{equation}
	Thus, we conclude that $(\beta^\prime,a^\prime) \in 
	R_{h_{\bar{r},\sigma}}(\chi^\prime)$ by the definition thereof (see the 
	statement of \cref{thm:safe_control_region_grows}). Finally, since 
	$\chi^\prime$ and $\omega^\prime$ were chosen arbitrarily, we get the 
	desired conclusion.
\end{proof}


\subsection{Proof of That a Barrier Function Exists for Each KBM Instance} 
For $h_{\bar{r},\sigma}$ and $\alpha_{v_\text{max}}$ to be a useful class of 
barrier functions, it should be that case that at least one of these candidates 
is in fact a barrier function for each instance of the KBM. We make this claim 
in the form of the following Theorem.

\begin{theorem}\label{thm:parameter_existence}
	Consider any KBM robot with length parameters $\ell_r = \ell_f$; maximum 
	steering angle $\delta_{f_\text{max}}$; and maximum velocity $v_\text{max} 
	> 0$. Furthermore, suppose that the following two conditions hold:
	\begin{enumerate}
		\item $\beta_\text{max} \leq \pi/2$, or equivalently, 
			${\delta_f}_\text{max} \leq \tfrac{\pi}{2}$;

		\item $\tfrac{1}{\ell_r}\left( \sigma(1-\sigma)\ell_r + \sigma 
			\bar{r} \right) \sin(\tfrac{\pi}{4} + 
			\tfrac{\beta_\text{max}}{2})\sin(\beta_\text{max}) \geq 2$; and
 
	\end{enumerate}

	Then for every $\chi = (r, \xi, v)$ such that $0 < v \leq v_\text{max}$ the 
	set $R_{h_{\bar{r},\sigma}}(\chi)$ is non-empty. In particular, the 
	feedback controller (interpreted as a function of $\xi$ only):
	\begin{equation}
		\pi : \xi \mapsto
		\begin{cases}
			-\beta_\text{max} & \xi < -\epsilon \\
			\xi & \xi \in [-\beta_\text{max}, \beta_\text{max}] \\
			\beta_\text{max} & \xi > \epsilon
		\end{cases}
	\end{equation}
	is safe.
\end{theorem}

\begin{remark}
Note that there is always a choice of $\bar{r}$ and $\sigma \in (0,1)$ such 
that condition {\itshape (ii)} can be satisfied. In particular, it suffices for 
$\bar{r}$ and $\sigma$ to be chosen such that:
\begin{equation}\label{eq:translated_r_sigma_condition}
	2\frac{(\ell_r/\bar{r})}{\sin(\beta_\text{max}) \sin(\pi/4 + 
	\beta_\text{max}/2)} \leq \sigma.
\end{equation}
Thus, by making $\bar{r}$ large enough relative $\ell_r$, it is possible to 
choose a $\sigma \in (0,1)$ such that the inequality 
\eqref{eq:translated_r_sigma_condition} holds, and {\itshape (ii)} is 
satisfied.  
\end{remark}
\begin{proof}
	{\itshape (\cref{thm:parameter_existence})} As a consequence of 
	\cref{thm:safe_control_region_grows}, it is enough to show that 
	$R_{h_{\bar{r},\sigma}}((r_\text{min}(\xi),\xi,v_\text{max}))$ is non-empty 
	for every $\xi \in [-\pi,\pi]$.

	The strategy of the proof will be to consider the control $\beta = 
	\pi(\xi)$, and verify that for each $\chi = (r, \xi, v) \in 
	\tilde{\mathcal{C}}_{h_{\bar{r},\sigma}}$ such that $\xi \in [0, \pi]$, we 
	have:
	\begin{equation}\label{eq:sub_condition}
		\mathcal{L}_{\bar{r},\sigma, \ell_r}( \xi, \pi(\xi)) \geq 0.
	\end{equation}
	The symmetry of the problem will allow us to make a similar conclusion for 
	$\xi \in [-\pi, 0]$.

%
We proceed by partitioning the interval $[0,\pi]$ into the following three 
intervals:
	\begin{equation*}
	 	I_1 \triangleq [0, \beta_\text{max}], \;\;
	 	I_2 \triangleq (\beta_\text{max}, \pi/2 + \beta_\text{max}], \;\;
	 	I_3 \triangleq (\pi/2 + \beta_\text{max}, \pi].
	 \end{equation*}
and consider the cases that $\xi$ is in each such interval separately. 

\noindent\textbf{Case 1 $\xi \in I_1$:} In this case, $\pi(\xi) = \xi$, and 
$\xi \leq \beta_\text{max} \leq \pi/2$ by assumption. It is direct to show that:
	\begin{equation}
		\cos(\xi - \pi(\xi)) = \cos(0) \geq 0
	\end{equation}
	and
	\begin{equation}\label{eq:case1_sin_lower_bound}
	\sin(\xi/2)\sin(\xi - \pi(\xi)) = 0.
	\end{equation}
	Hence, the $\cos$ term in \eqref{eq:zbf_full} can be lower bounded by zero, 
	and the first term in \eqref{eq:zbf_full} is identically zero by  
	\eqref{eq:case1_sin_lower_bound}. Thus, in this case, \eqref{eq:zbf_full} 
	is lower bounded as as:
	\begin{equation}
		\mathcal{L}_{\bar{r},\sigma, \ell_r}( \xi, \beta_\text{max}) \geq 
		\frac{\sigma \cdot v \cdot \sin(\xi/2) \sin(\pi(\xi))}{2 \cdot \bar{r} \cdot \ell_r},
	\end{equation}
	which of course will be greater than zero since $\xi \in I_1 = 
	[0,\beta_\text{max}]$ with $\beta_\text{max}\leq \pi/2$ by assumption 
	\emph{(i)}.

\noindent\textbf{Case 2 $\xi \in I_2$:} In this case, $\pi(\xi) = 
\beta_\text{max}$. Thus, for $\xi \in I_2$, we have that:
	\begin{align}
		\cos(\xi - \beta_\text{max}) &\geq 0 \\
		\sin(\xi/2)\sin(\xi - \beta_\text{max}) &\geq 0 \\
		\sin(\xi/2)\sin(\beta_\text{max}) &\geq 0.
	\end{align}
	Consequently, \eqref{eq:sub_condition} is automatically satisfied, since 
	all of the quantities in the Lie derivative are positive.

\noindent\textbf{Case 3 $\xi \in I_3$:} In this case, $\pi(\xi) = 
\beta_\text{max}$ as in Case 2. However, the $\cos$ term is now negative in 
this case:
	\begin{equation}
		0 > \cos(\xi - \beta_\text{max}) \geq -\frac{1}{\bar{r}^2}.
	\end{equation}
	Thus, since the other two terms are positive on this interval, we need to 
	have:
	\begin{equation}
		\sin(\tfrac{1}{2}(\tfrac{\pi}{2} + \beta_\text{max})) \negthinspace
		\left(
			\tfrac{\sigma(1-\sigma)}{2 \cdot \bar{r}^2} 
				\sin(\pi - \beta)
			+
			\tfrac{\sigma}{2 \cdot \bar{r} \cdot \ell_r}
				\sin(\beta)
		\right)
		\geq \tfrac{1}{\bar{r}^2}.
	\end{equation}
	This follows because on $I_3$, $\sin(\tfrac{\xi}{2}) \geq 
	\sin(\tfrac{1}{2}(\tfrac{\pi}{2} + \beta_\text{max}))$ and $\sin(\xi - 
	\beta_\text{max}) \geq \sin(\pi - \beta_\text{max}) $; i.e. we substituted 
	the lower and upper end points of $I_3$, respectively. Noting that 
	$\sin(\pi - \beta_\text{max}) = \sin(\beta_\text{max})$, we finally obtain:
	\begin{equation}
		\sin(\tfrac{1}{2}(\tfrac{\pi}{2} + \beta_\text{max})) \sin(\beta_\text{max})
		\left(
			\tfrac{\sigma(1-\sigma)}{2} 
			+
			\tfrac{\sigma \bar{r}}{2 \cdot\ell_r}				
		\right)
		\geq 1.
	\end{equation}
	The preceding is just another form of \emph{(ii)} so we have the desired 
	conclusion in \eqref{eq:sub_condition}.

	The conclusion of the theorem then follows from the combined consideration 
	of Cases 1-3 and \cref{thm:safe_control_region_grows} as claimed above.
\end{proof}




%



%



%





\end{document}